\declaretheorem[name=Theorem]{Theorem}
\declaretheorem[name=Lemma, numberlike=Theorem]{Lemma}
\newcommand{\R}{\mathbb{R}}
\newcommand{\E}{\mathop{\mathbb{E}}}
\newcommand{\bw}{\vec{w}}
\newcommand{\bx}{\vec{x}}
\newcommand{\by}{\vec{y}}
\title{Better SGD using Second-order Momentum}
\author{Hoang Tran\\Boston University\\\texttt{tranhp@bu.edu} \and Ashok Cutkosky\\Boston University\\\texttt{ashok@cutkosky.com}}
\date{}
\begin{document}

\maketitle
\begin{abstract}
 We develop a new algorithm for non-convex stochastic optimization that finds an $\epsilon$-critical point in the optimal $O(\epsilon^{-3})$ stochastic gradient and Hessian-vector product computations. Our algorithm uses Hessian-vector products to ``correct'' a bias term in the momentum of SGD with momentum. This leads to better gradient estimates in a manner analogous to variance reduction methods. In contrast to prior work, we do not require excessively large batch sizes,  and are able to provide an adaptive algorithm whose convergence rate automatically improves with decreasing variance in the gradient estimates. We validate our results on a variety of large-scale deep learning architectures and benchmarks tasks.
\end{abstract}
\section{Introduction}
First-order algorithms such as Stochastic Gradient Descent (SGD) or Adam \citep{kingma2014adam} have emerged as the main workhorse for modern Machine Learning (ML) tasks. They achieve good empirical results while being easy to implement and requiring relatively few computational resources. When the objective function is convex, SGD's convergence is well-understood \citep{zinkevich2003online}. However, the results are much less favorable for the non-convex setting in which modern deep learning models operate. In fact, the problem of optimizing non-convex function is NP-hard in general, so instead analysis often focuses on finding a critical point - that is, a point at which the gradient of the loss is zero. SGD is well-known to find an $\epsilon$-approximate critical point in at most $O(\epsilon^{-4})$ total stochastic gradient evaluations \citep{ghadimi2013stochastic}. In an effort to improve upon SGD, many different algorithms and heuristics have been proposed, including various advanced learning rate schedules \citep{loshchilov2016sgdr, goyal2017accurate}, or per-coordinate learning rates and adaptive algorithms \citep{duchi2011adaptive,mcmahan2010adaptive, kingma2014adam, reddi2018convergence}. All of these methods have enjoyed practical success, but none of their convergence rates has shown any asymptotic benefit over the $O(\epsilon^{-4})$ rate of SGD, which is to be expected because this rate is in fact \emph{optimal} in the worst-case for first-order methods operating on smooth losses \citep{arjevani2019lower}. Thus, in order to make improved algorithms, we need additional assumptions. In this work, we consider the case in which our algorithm is not a pure first-order method, but has some limited access to second-order information.

Second-order algorithms such as Newton's method are among the most powerful methods in optimization theory. Instead of using a linear approximation for the objective via the gradient, Newton's method employs a quadratic approximation using both the gradient and the Hessian. This quadratic approximation hugs the curvature of the error surface and allows each iteration to make much faster progress. Unfortunately, second-order algorithms are also much more complex and expensive to implement. The cost of forming the Hessian matrix is $O(d^2)$ and the cost of a Newton step is typically $O(d^3)$, thus rendering such algorithms largely impractical for large-scale learning problems \citep{bottou2007tradeoffs}. 

The good news is we may not have to explicitly compute the Hessian matrix to be able to take advantages of the nice properties of second-order methods. In particular, it is possible to compute a \emph{Hessian-vector product}, that is a vector of the form $\nabla^2 F(x)v$ for arbitrary $x$ and $v$, in roughly the same time it takes to evaluate $\nabla F(x)$
\citep{Pearlmutter94fastexact}. In this paper, we develop a novel second-order SGD-based algorithm that uses a stochastic gradient and Hessian-vector product oracle to expedite the training process. 
Our algorithm not only enjoys optimal theoretical properties, it is also practically effective, as demonstrated through our experimental results across various deep learning tasks.
\subsection{Contributions}
We present a novel algorithm based on SGD with momentum (SGDHess) that uses Hessian-vector products to ``correct'' a bias term in the momentum. Through our theoretical analysis, we show that our algorithm requires the optimal $O(\epsilon^{-3})$ oracle calls for finding stationary points \citep{pmlr-v125-arjevani20a}. In contrast to previous algorithms with this property (e.g. \citep{pmlr-v125-arjevani20a}), we do not require excessively large batch sizes, and we also feel that our algorithm and analysis is relatively more straightforward: our method is a simple modification to SGD that could easily be applied to any momentum-based optimizer.

We also provide a variant of our algorithm based on normalized SGD, which dispenses with a Lipschitz assumption on the objective, and another variant with  an \emph{adaptive} learning rate that automatically improves to a rate of $O(\epsilon^{-2})$ when the noise in the gradients is negligible.

Finally, we test our algorithm on multiple learning tasks on different deep architectures. In all of these tasks, our algorithm consistently matches or exceeds the best algorithms for the task. Further, the tuning process of our algorithm is reasonably simple; in many cases, we can use the exact parameters of SGD for the new algorithm.

In the next two subsections (\ref{sec:related} and \ref{sec:setup}), we describe some related work and the assumptions and formal problem setting we study. Then, in Section \ref{sec:algorithm1}, we formally provide our algorithm and discuss its theoretical lower bound. In Section \ref{sec:normalized} and Section \ref{adaptive}, we provide an analysis of our algorithm in the normalized and adaptive settings. The empirical results are discussed in Section 5 and we conclude with a discussion on Section 6.

\subsection{Related works}\label{sec:related}
Although second-order methods have many attractive theoretical properties, making these methods practical is challenging. There have been multiple efforts to develop efficient second-order algorithms. One popular approach is the Broyden–Fletcher–Goldfarb–Shanno algorithm (BFGS) \citep{keskar2019limited, Liu89onthe,bollapragada2018progressive, pan2017accelerating}, which provides a faster approximation to the Newton step using only a first-order oracle. Although BFGS and similar methods have been applied with some success to deep learning tasks \citep{bollapragada2018progressive, ma2020apollo}, we stress that such methods are \emph{fundamentally incapable} of matching the convergence guarantees available to algorithms that truly use the Hessian in the stochastic non-convex setting \citep{arjevani2019lower}.


Another line of work is the use of trust region methods \citep{xu2020second} incorporating Hessian information using Hessian Sub-Sampling \citep{xu2020newton}. Although the method is theoretically appealing, it is computationally expensive and thus is less practical on very large networks. An alternative direction is to incorporate a diagonal approximation of the Hessian \citep{yao2020adaHessian} using Hutchinson's method. The advantage of this method is its $O(d)$ memory complexity and its ability to perform well on popular deep learning benchmarks. However, it does not have as strong a theoretical basis as other methods.


More recent attention has focused on the use of Hessian-vector products as a way to achieve some of the benefits of second-order information without incurring significant computational overhead \citep{carmon2018accelerated,agarwal2017finding}.  \citet{tripuraneni2017stochastic} uses Hessian-vector products to approximate the cubic regularized Newton method \citep{nesterov2006cubic}, converging in only $O(\epsilon^{-3.5})$ stochastic oracle calls. \citet{JMLR:v20:19-055} leverages both Hessian information and variance-reduction techniques to achieve a convergence rate of $O(\frac{n^{4/5}}{\epsilon^{3/2}})$ for finite-sum problems with $n$ summands. However, both algorithms fail to achieve the optimal $O(\epsilon^{-3})$ rate, which was first achieved by \citet{arjevani2020second}, and to our knowledge none of these have been tested extensively on deep learning benchmarks.
\subsection{Problem setup}\label{sec:setup}
We are interested in minimizing a function $F$ given by:
\begin{align*}
    F(\bx) = \E_{z \sim P_z}[f(\bx, z)]
\end{align*}
Where $f(\bx, z)$ is a differentiable function of $\bx$. We are also given a point $\bx_1$ and define $\Delta\in  \R$ by:
\begin{align}
    \sup_{\bx \in \R^d} F(\bx_1) - F(\bx) = \Delta \label{eqn:diameter}
\end{align}
Further, we assume $F$ is $L$-smooth. That is, for all $\bx$ and $\by$:
\begin{align}
    \|\nabla F(\bx) - \nabla F(\by) \| \le L\|\bx - \by \| \label{eqn:Lsmooth}
\end{align}

This setup models the standard supervised learning setting. In this scenario, we are interested
in finding some model parameters that minimize a population loss function. Thus, $\bx$ indicates the model parameters (e.g. the
weights of some neural networks), $z$ indicates an example data point\footnote{$z$ could  also indicate a minibatch of  examples}, (e.g. audio file/text transcript pair), and
$f(\bx, z)$ indicates the loss of the neural network using weights $\bx$ on the example $z$. The distribution $P_z$ is some
distribution over examples. It could be an empirical distribution over training examples, or the true distribution over examples in the wild.

The major challenge here is that we may not know the distribution $P_z$. As a result, it is usually either impossible or computationally unreasonable for us to
actually compute the true value of $F$. On top of that, without further
assumptions like convexity, it is NP-hard to find the minimizer of $F$. Instead, we will search for a \emph{critical point}, which is a point where the gradient is 0. Thus, our goal is to develop an
algorithm that will output an $\bx$ that makes $E[\|\nabla F(\bx)\|]$ as small as possible, where the expectation is over both
the randomness in the gradient queries as well any randomness in the algorithm.

To design a second-order algorithm, we also assume access to second-order stochastic oracle. Given any $\bx$ and vector $\bw$, we are allowed to sample $z \sim P_z$ and compute $\nabla f(\bx, z)$ and $\nabla^2f(\bx,z)\bw$. We assume that for all $\bx$ and $\bw$:
\begin{align}
    \E[\|\nabla f(\bx, z) -\nabla F(\bx)\|^2] &\le \sigma_G^2 \label{eqn:gradvariance}\\
    \E[\|\nabla^2f(\bx,z)\bw -\nabla^2F(\bx,z)\bw\|^2] &\le \sigma_H^2\|\bw\|^2 \label{eqn:hessvectorvariance}
\end{align}

In addition, we will also need the assumption of second-order smoothness. For all $\bx$, $\by$, and $\bw$:
\begin{align}
    \|(\nabla^2F(\bx)-\nabla^2F(\by))\bw\| \le \rho \|\bx-\by\| \|\bw\| \label{eqn:rhosmoothness}
\end{align}

Finally, for technical reasons in the analysis of two of  our algorithms, we will need to assume a crude bound on the magnitude of gradients of $f$. Suppose there exists some $G$ such that:
\begin{align}
    \|\nabla f(\bx,z)\| \le G \label{eqn:gradientbound}
\end{align}
\section{SGD with Hessian-corrected momentum}\label{sec:algorithm1}
In this section, we provide our SGD-based second-order algorithm. Before we go into the analysis, let us compare our algorithm to SGD to see why adding the Hessian-vector product term to the momentum update is a good idea.
The standard SGD with momentum update is the following:
\begin{align*}
    \hat g_t &= (1-\alpha)\hat g_{t-1} + \alpha\nabla f(\bx_t,z_t)\\
    \bx_{t+1} &= \bx_t - \eta\hat g_t
\intertext{To gain some intuition for this update, let us suppose (for illustrative purposes only) that $\hat g_{t-1} = \nabla F(\bx_{t-1})$. Then, viewing $\hat g_t$ as an estimate of $\nabla F(\bx_t)$, define the error as:}
\hat\epsilon_t &= \hat g_t - \nabla F(\bx_t)
\end{align*}
Intuitively, SGD will converge rapidly if this error is small. We can write:
\begin{align*}
    \hat\epsilon_t = (1-\alpha)(\hat g_{t-1} - \nabla F(\bx_t))& 
  + \alpha(\nabla f(\bx_t, z_t)- \nabla F(\bx_t))
\end{align*}
The second term is fairly benign: it is zero in expectation and is multiplied by a potentially small value $\alpha$. However, the first term is a bit trickier to bound since $\nabla F(\bx_{t-1}) \ne \nabla F(\bx_{t})$. Our approach is to leverage the second-order oracle to improve this estimate. Specifically, we modify the momentum update to:
\begin{align*}
    \hat g_t &= (1-\alpha)(\hat g_{t-1} + \nabla^2 f(\bx_t, z_t)(\bx_t-\bx_{t-1})) + \alpha\nabla f(\bx_t,z_t)
\end{align*}
Now, if $\hat g_{t-1} = \nabla F(\bx_{t-1})$, we would have
\begin{align*}
    \E[\hat g_{t-1} + \nabla^2 f(\bx_t, z_t)(\bx_t-\bx_{t-1})] &= \hat g_{t-1} + \nabla^2 F(\bx_t, z_t)(\bx_t-\bx_{t-1})\\
&= \nabla F(\bx_t)+ O(\|\bx_t-\bx_{t-1}\|^2)
\end{align*}
Further, we have:
\begin{align*}
    \hat\epsilon_t = (1-\alpha)(\hat g_{t-1} - \nabla F(\bx_t) + \nabla^2f(\bx_t, z_t)(\bx_t - \bx_{t-1}))
  + \alpha(\nabla f(\bx_t, z_t)- \nabla F(\bx_t))
\end{align*}
so that the first term is now bounded by $O(\|\bx_{t-1}-\bx_t\|^2)$ and the second term is again easy to control via tuning $\alpha$. Without this second-order correction, we need to rely on $\hat g_{t-1} = \nabla F(\bx_t)+ O(\|\bx_t-\bx_{t-1}\|)$. Thus, by using another term of the Taylor expansion, we have improved the dependency to $O(\|\bx_t-\bx_{t-1}\|^2)$, which will create a corresponding reduction in our final error. The idea is that using momentum should make $\hat g_t$ very close to $\nabla F(x_t)$ because momentum is averaging together many estimates. Thus, to gain some intuition about how the estimate changes from one iteration to the next, we examined the case that the prior iteration's estimate was actually correct. Of course, in our theoretical analysis we \emph{do not} make any such assumption.

Note that this approach is morally similar to the methods of \citep{cutkosky2019momentum,tran2019hybrid}, which are themselves similar to the recursive variance reduction \citep{nguyen2017sarah} algorithm for stochastic convex optimization. In these algorithms, the Hessian-vector product is replaced with two gradient evaluations with the same example $z_t$: $\nabla f(\bx_t,z_t)- \nabla f(\bx_{t-1},z_t) $. So long as the individual functions $f(\bx_t,z_t)$ are $L$-smooth, this will eventually have a similar correction effect. However, past empirical work suggests that variance reduction may not be effective in practice on deep learning tasks \citep{defazio2019inefectiveness}, while the use of Hessian-vector products has not been as extensively tested to our knowledge.

Our stochastic gradient descent algorithm with Hessian-corrected momentum is the following, and its analysis is presented in Theorem \ref{thm:sgdhess}. Note that we only need to  make a small modification to the standard SGD update, which allows for streamlined analyses. Concretely, we avoid the large batch-size requirement of \citep{arjevani2020second}, and can extend the analysis to adaptive learning rates in Section \ref{adaptive}.
\begin{algorithm}
   \caption{SGD with Hessian-corrected Momentum (\textbf{SGDHess})}
   \label{alg:sgdhess}
   \begin{algorithmic}
      \STATE{\bfseries Input: } Initial Point $\bx_1$, learning rates $\eta_t$, momentum parameters $\alpha_t$, time horizon $T$, parameter $G$:
      \STATE Sample $z_1\sim P_z$.
      \STATE $\hat g_1 \leftarrow \nabla f(\bx_1, z_1)$.
      \STATE $\bx_2 \leftarrow \bx_1 - \eta_1\hat g_1$
      \FOR{$t=2\dots T$}
      \STATE Sample $z_t\sim P_z$.
      \STATE  $\hat g_t \leftarrow (1-\alpha_{t-1}) (\hat g^{clip}_{t-1} + \nabla^2 f(\bx_t, z_t)(\bx_t - \bx_{t-1}))+ \alpha_{t-1}\nabla f(\bx_t, z_t)$.
      \STATE  $\hat g^{clip}_t \leftarrow  \hat g_t$ if $\|\hat g_t\| \le G$; otherwise, $\hat g^{clip}_t \leftarrow G\frac{\hat g_t}{\|\hat g_t\|}$
      
      \STATE $\bx_{t+1} \leftarrow \bx_t  - \eta_t \hat g^{clip}_{t}$.
      \ENDFOR
      \STATE Return $\hat x$ uniformly at random from $\bx_1,\dots,\bx_T$ (in practice $\hat x = \bx_{T}$).
   \end{algorithmic}
\end{algorithm}

\begin{Theorem}\label{thm:sgdhess}Assume (\ref{eqn:diameter}), (\ref{eqn:Lsmooth}), (\ref{eqn:hessvectorvariance}), (\ref{eqn:gradientbound}), (\ref{eqn:rhosmoothness}). Then Algorithm \ref{alg:sgdhess} with $\eta_t = \frac{1}{C t^{1/3}}$, $\alpha_t=2K\eta_t\eta_{t+1}$ with $C\ge \sqrt{2K}$ and $C \ge 4L$, $K = \frac{2G^2\rho^2}{-2\sigma_H^2+\sqrt{4\sigma_H^4+\frac{\rho^2G^2}{2}}}$ and $ D = \frac{24}{5}K + \frac{16C^6}{25 K^2}$ guarantees 
\begin{align*}
    &\frac{1}{T}\sum_{t=1}^T \E[\|\nabla F(\bx_t)\|^2]\le \frac{20C\Delta +96C^2G^2/K}{T^{2/3}} + \frac{20G^2D(1+\log(T))}{C^2T^{2/3}}
\end{align*}
\end{Theorem}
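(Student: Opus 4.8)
The plan is to run a potential-function argument that couples the usual descent lemma for $F$ with a contracting recursion for the momentum error $\epsilon^{clip}_t := \hat g^{clip}_t - \nabla F(\bx_t)$, where the Hessian-vector correction is exactly what upgrades the rate of contraction. First I would record that $\|\nabla f(\bx,z)\|\le G$ forces $\|\nabla F(\bx)\|\le G$, so the clipping step is a Euclidean projection onto a convex set containing $\nabla F(\bx_t)$; hence $\|\epsilon^{clip}_t\|\le\|\epsilon_t\|$ with $\epsilon_t:=\hat g_t-\nabla F(\bx_t)$, and $\|\hat g^{clip}_t\|\le G$ always. Plugging $\bx_{t+1}=\bx_t-\eta_t\hat g^{clip}_t$ into $L$-smoothness, writing $\hat g^{clip}_t=\nabla F(\bx_t)+\epsilon^{clip}_t$, and using $\eta_t\le 1/C\le1/(4L)$ to swallow the quadratic term, gives a per-step descent inequality $F(\bx_{t+1})\le F(\bx_t)-c_1\eta_t\|\nabla F(\bx_t)\|^2+c_2\eta_t\|\epsilon^{clip}_t\|^2$ with absolute constants $c_1,c_2$.

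Next I would decompose the momentum update around $\nabla F(\bx_t)$: since $\bx_t-\bx_{t-1}=-\eta_{t-1}\hat g^{clip}_{t-1}$, one has $\epsilon_t=(1-\alpha_{t-1})(\epsilon^{clip}_{t-1}+S_t+Z_t)+\alpha_{t-1}N_t$, where $S_t=\nabla F(\bx_{t-1})-\nabla F(\bx_t)+\nabla^2F(\bx_t)(\bx_t-\bx_{t-1})$ is the Taylor remainder, bounded by $\frac{\rho}{2}\|\bx_t-\bx_{t-1}\|^2$ via (\ref{eqn:rhosmoothness}); $Z_t=(\nabla^2f(\bx_t,z_t)-\nabla^2F(\bx_t))(\bx_t-\bx_{t-1})$ is conditionally mean-zero with $\E[\|Z_t\|^2\mid\mathcal{F}_{t-1}]\le\sigma_H^2\|\bx_t-\bx_{t-1}\|^2$ by (\ref{eqn:hessvectorvariance}); and $N_t=\nabla f(\bx_t,z_t)-\nabla F(\bx_t)$ is conditionally mean-zero with $\E[\|N_t\|^2\mid\mathcal{F}_{t-1}]\le G^2$ by (\ref{eqn:gradientbound}) (so the $\sigma_G$ bound is never needed). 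Taking $\E[\cdot\mid\mathcal{F}_{t-1}]$ kills the cross terms against $Z_t,N_t$; on $\|\epsilon^{clip}_{t-1}+S_t\|^2$ I would apply Young's inequality with parameter of order $\alpha_{t-1}$, and bound $\|\bx_t-\bx_{t-1}\|^2=\eta_{t-1}^2\|\hat g^{clip}_{t-1}\|^2\le 2\eta_{t-1}^2(\|\nabla F(\bx_{t-1})\|^2+\|\epsilon^{clip}_{t-1}\|^2)$ and $\|\bx_t-\bx_{t-1}\|^4\le G^2\|\bx_t-\bx_{t-1}\|^2$. This produces a recursion of the form
\[\E\|\epsilon^{clip}_t\|^2\le\big(1-\alpha_{t-1}+\kappa\,\eta_{t-1}^2\big)\E\|\epsilon^{clip}_{t-1}\|^2+\kappa\,\eta_{t-1}^2\,\E\|\nabla F(\bx_{t-1})\|^2+2\alpha_{t-1}^2G^2,\]
with $\kappa=4\sigma_H^2+\Theta(\rho^2G^2/K)$. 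The point of the formula for $K$ is that it is the positive root of $K^2=8\rho^2G^2+16K\sigma_H^2$, which is exactly equivalent to $\kappa\le K/4$; together with $C\ge\sqrt{2K}$ (which also guarantees $\alpha_t\le1$) this gives $\alpha_{t-1}\ge 2\kappa\eta_{t-1}^2$, so the coefficient of $\E\|\epsilon^{clip}_{t-1}\|^2$ is a genuine contraction, $\le 1-\tfrac12\alpha_{t-1}$.

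To combine the two pieces I would study $\Phi_t=\E F(\bx_t)+\lambda_t\,\E\|\epsilon^{clip}_t\|^2$ with $\lambda_t=\Theta(1/(K\eta_t))$, choosing the constant so that $\lambda_{t+1}\kappa\eta_t$ is small enough for the $\|\nabla F(\bx_t)\|^2$ feedback from the error recursion to be dominated by the $-c_1\eta_t\|\nabla F(\bx_t)\|^2$ coming from descent, and so that $\lambda_{t+1}(1-\tfrac12\alpha_t)+c_2\eta_t\le\lambda_t$. The latter holds for all $t\gtrsim C^6/K^3$; for the finitely many smaller indices I bound their contribution to $\sum_t\eta_t\|\nabla F(\bx_t)\|^2$ directly using $\|\nabla F\|\le G$, producing a term of size $\asymp C^3G^2/K^2$ — this is the origin of the $C^6/K^2$ piece of $D$. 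Telescoping $\Phi_t$ and using $\Phi_{T+1}\ge F(\bx_1)-\Delta$ gives $\sum_{t=1}^T\eta_t\,\E\|\nabla F(\bx_t)\|^2\lesssim\Delta+\lambda_1G^2+G^2\sum_t\lambda_{t+1}\alpha_t^2+C^3G^2/K^2$, where $\lambda_1\asymp C/K$ and $\sum_t\lambda_{t+1}\alpha_t^2\asymp K\sum_t\eta_t^3\asymp(K/C^3)(1+\log T)$ — the last sum being exactly where $\log T$ enters. Finally, since $\eta_t=1/(Ct^{1/3})$ is decreasing, $\sum_t\eta_t\,\E\|\nabla F(\bx_t)\|^2\ge\eta_T\sum_t\E\|\nabla F(\bx_t)\|^2$, hence $\frac1T\sum_t\E\|\nabla F(\bx_t)\|^2\le\frac{C}{T^{2/3}}\sum_t\eta_t\,\E\|\nabla F(\bx_t)\|^2$; substituting the previous bound and tracking the explicit constants yields the stated inequality, with the $\Delta$, $C^2G^2/K$, $KG^2\log T/C^2$ and $C^4G^2\log T/K^2$ terms matching the four contributions above.

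The main obstacle is making all these inequalities hold simultaneously for the $C,K$ in the statement: the Hessian-noise term unavoidably feeds $\|\nabla F(\bx_{t-1})\|^2$ back into the error recursion, and one must (i) keep this feedback small enough to reabsorb it through the descent lemma, (ii) keep the error recursion contractive with precisely the $K$ given, and (iii) handle the $C^6/K^2$ startup cost from the early indices and the $\log T$ from $\sum_t\eta_t^3$. Checking that the quadratic defining $K$ is exactly what makes (i) and (ii) compatible — and tuning the Young parameter and the weights $\lambda_t$ accordingly — is the bulk of the argument; the remaining steps are routine bookkeeping of constants.
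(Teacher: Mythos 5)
Your proposal is correct and follows essentially the same route as the paper: the same potential $F(\bx_t)+\Theta\!\left(\frac{1}{K\eta_t}\right)\|\hat\epsilon_t\|^2$, the same decomposition of the momentum error into a contracting term, the second-order Taylor remainder, the Hessian-noise and gradient-noise martingale terms, the same observation that clipping is nonexpansive toward $\nabla F(\bx_t)$ since $\|\nabla F\|\le G$, and the same quadratic $K^2=16\sigma_H^2K+8\rho^2G^2$ defining $K$. The only (cosmetic) divergence is how the mismatch $\frac{1}{\eta_{t+1}}-\frac{1}{\eta_t}$ in the potential weights is absorbed — the paper applies Young's inequality uniformly in $t$, paying an extra $\eta_t^3\cdot\frac{16C^6G^2}{25K^2}$ per step, whereas you split off the $O(C^6/K^3)$ early indices and bound them directly via $\|\nabla F\|\le G$ — and both yield the $C^6/K^2$ contribution to $D$.
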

To see why we want to use a clipped gradient, let us consider the error term with an \emph{unclipped} gradient:
\begin{align*}
     \hat\epsilon_{t+1} &= (1-\alpha_t)(\hat g_{t} - \nabla F(\bx_t)) + (1-\alpha_t)(\nabla^2 f(\bx_{t+1}, z_{t+1})-\nabla^2 F(\bx_{t+1})(\bx_{t+1}-\bx_t))\\
&\quad\quad+
(1-\alpha_t)(\nabla F(\bx_t) +  \nabla^2F(\bx_{t+1})(\bx_{t+1}-\bx_t) -\nabla F(\bx_{t+1})) + \alpha_t \epsilon^G_{t+1}
\end{align*}
It is easy to see that we could bound the second and the fourth term using assumption (\ref{eqn:hessvectorvariance}) and (\ref{eqn:gradientbound}) and the first term is simply $(1-\alpha_t)\hat \epsilon_t$, which we can use to analyze how the error changes over each iteration . However, the third term is a bit trickier to control. Let $\delta_t =(\nabla F(\bx_t) +  \nabla^2F(\bx_{t+1})(\bx_{t+1}-\bx_t) -\nabla F(\bx_{t+1}))$, from second-order smoothness we have:
\begin{align*}
    \|\delta_t\|^2 \le \frac{\rho^2}{4}\|\bx_{t+1} - \bx_t\|^4 
    \le \frac{\rho^2}{4} \eta_t^4 \|\hat g_{t}\|^4
\end{align*}
Intuitively, if we let $\hat g_t$ be unbounded, $\|\hat g_t\|^4$ might be difficult to control since we only assume a bound on the variance of the $\nabla f(\bx_t, z_t)$ rather than the fourth moment. Therefore, by enforcing some bound on the norm of $\hat g_t$ (using clipping), we make sure that we can control this term properly.

In order to prove this Theorem, we will require two Lemmas. The first (Lemma \ref{thm:onestep}) is due to \citep{cutkosky2019momentum}, and provides a bound on the progress of one iteration of stochastic gradient descent without making any assumptions (such as unbiasedness) about the gradient estimates. The second (Lemma \ref{thm:sgderror}), is a technical result characterizing the quality of the gradient estimates $\hat g^{clip}_t$ generated by Algorithm \ref{alg:sgdhess}. 
\begin{restatable}{Lemma}{onestep}[\citep{cutkosky2019momentum} Lemma 2] \label{thm:onestep}
Define:
\begin{align*}
    \hat \epsilon_t &= \hat g^{clip}_{t} - \nabla F(\bx_t)
\end{align*}
Suppose $\eta_t$ is a deterministic and non-increasing choice of learning rate. Then, so long as $\eta_t\le \frac{1}{4L}$,
\begin{align*}
    \E[F(\bx_{t+1})-F(\bx_t)]&\le -\frac{\eta_t}{4} \E[\|\nabla F(\bx_t)\|^2] + \frac{3\eta_t}{4}\E[\|\hat\epsilon_t\|^2]
\end{align*}
\end{restatable}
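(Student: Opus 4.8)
The plan is to use nothing beyond the $L$-smoothness of $F$ together with two elementary inequalities; no probabilistic structure of $\hat g^{clip}_t$ is needed, because the estimate will hold pathwise and we simply take expectations at the very end. I would start from the quadratic upper bound implied by (\ref{eqn:Lsmooth}), namely $F(\bx_{t+1}) \le F(\bx_t) + \langle \nabla F(\bx_t), \bx_{t+1}-\bx_t\rangle + \frac{L}{2}\|\bx_{t+1}-\bx_t\|^2$, and then substitute the update rule $\bx_{t+1}-\bx_t = -\eta_t \hat g^{clip}_t$, rewriting $\hat g^{clip}_t = \nabla F(\bx_t) + \hat\epsilon_t$ via the definition of $\hat\epsilon_t$. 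This gives
\[
F(\bx_{t+1}) - F(\bx_t) \le -\eta_t\|\nabla F(\bx_t)\|^2 - \eta_t\langle \nabla F(\bx_t),\hat\epsilon_t\rangle + \frac{L\eta_t^2}{2}\|\nabla F(\bx_t)+\hat\epsilon_t\|^2 .
\]

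Next I would control the two error-dependent terms separately. For the cross term, Young's inequality gives $-\langle \nabla F(\bx_t),\hat\epsilon_t\rangle \le \tfrac12\|\nabla F(\bx_t)\|^2 + \tfrac12\|\hat\epsilon_t\|^2$. For the quadratic term, $\|a+b\|^2 \le 2\|a\|^2+2\|b\|^2$ gives $\frac{L\eta_t^2}{2}\|\nabla F(\bx_t)+\hat\epsilon_t\|^2 \le L\eta_t^2\|\nabla F(\bx_t)\|^2 + L\eta_t^2\|\hat\epsilon_t\|^2$, and invoking $\eta_t \le \frac{1}{4L}$ we have $L\eta_t^2 \le \frac{\eta_t}{4}$, so this is at most $\frac{\eta_t}{4}\|\nabla F(\bx_t)\|^2 + \frac{\eta_t}{4}\|\hat\epsilon_t\|^2$. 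Collecting coefficients, the $\|\nabla F(\bx_t)\|^2$ terms contribute $(-1+\tfrac12+\tfrac14)\eta_t = -\tfrac{\eta_t}{4}$ and the $\|\hat\epsilon_t\|^2$ terms contribute $(\tfrac12+\tfrac14)\eta_t = \tfrac{3\eta_t}{4}$, which is exactly the asserted pathwise bound; taking expectations of both sides (using monotonicity of expectation) completes the proof.

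I do not anticipate a genuine obstacle here — this is the textbook descent-lemma argument. The only care required is in the bookkeeping of constants so that the split of the cross term and the absorption of the $L\eta_t^2$ terms land precisely at $-\tfrac14$ and $\tfrac34$ (a coarser split would still yield a valid, just weaker, inequality). It is also worth noting that the hypothesis that $\eta_t$ is deterministic and non-increasing plays no role in this single-step estimate; it will only be needed later, when the lemma is summed over $t$ in the proof of Theorem \ref{thm:sgdhess}.
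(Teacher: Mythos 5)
Your proof is correct and is essentially identical to the paper's own argument: both start from the $L$-smoothness descent inequality, decompose $\hat g^{clip}_t = \nabla F(\bx_t) + \hat\epsilon_t$, apply Young's inequality to the cross term and $\|a+b\|^2 \le 2\|a\|^2 + 2\|b\|^2$ to the quadratic term, and absorb $L\eta_t^2 \le \eta_t/4$ to land at the $-\tfrac{1}{4}$ and $\tfrac{3}{4}$ coefficients. Your observations that the bound is pathwise (expectation only taken at the end) and that the deterministic/non-increasing hypothesis on $\eta_t$ is not used in this single step are both accurate.
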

\begin{restatable}{Lemma}{sgderror}
\label{thm:sgderror}
Suppose that $f(\bx,z)$ satisfies (\ref{eqn:Lsmooth}), (\ref{eqn:hessvectorvariance}), (\ref{eqn:gradientbound}), and (\ref{eqn:rhosmoothness}). Define:
\begin{align*}
    \hat \epsilon_t &= \hat g^{clip}_{t} - \nabla F(\bx_t)
\end{align*}
Now, for some constant $C$ and $\sigma_H$, set $K = \frac{2G^2\rho^2}{-2\sigma_H^2+\sqrt{4\sigma_H^4+\frac{\rho^2G^2}{2}}}$, $\eta_t = \frac{1}{Ct^{1/3}}$ with $C \geq \sqrt{2K}$, and $\alpha_t=2K\eta_t\eta_{t+1}$. Then we have:
\begin{align*}
     \frac{6}{5K\eta_{t+1}} \E[\|\hat\epsilon_{t+1}\|^2]-\frac{6}{5K\eta_t}\E[\|\hat\epsilon_t\|^2]&\le-\frac{3\eta_t}{4}\E[\|\hat\epsilon_t\|^2] + \frac{\eta_t}{5}\E[\|\nabla F(\bx_t)\|^2] \\
&\quad\quad\quad+ \eta_t^3\left(\frac{24}{5}KG^2
     +\frac{16C^6G^2}{25K^2}\right)
\end{align*}
\end{restatable}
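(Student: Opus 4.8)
The plan is to derive a one-step bound on $\E[\|\hat\epsilon_{t+1}\|^2\mid\mathcal F_t]$, where $\mathcal F_t$ is the sigma-field generated by $z_1,\dots,z_t$, and then rearrange it into the stated weighted difference. Two elementary reductions come first. Since $\nabla F(\bx)=\E[\nabla f(\bx,z)]$, Jensen's inequality and (\ref{eqn:gradientbound}) give $\|\nabla F(\bx)\|\le G$, so $\nabla F(\bx_{t+1})$ always lies in the radius-$G$ ball onto which the clipping operation projects; as that ball is convex the projection is non-expansive, hence $\|\hat\epsilon_{t+1}\|=\|\hat g^{clip}_{t+1}-\nabla F(\bx_{t+1})\|\le\|\hat g_{t+1}-\nabla F(\bx_{t+1})\|$, and it suffices to bound the error of the \emph{unclipped} iterate. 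Second, $\|\hat\epsilon_t\|\le\|\hat g^{clip}_t\|+\|\nabla F(\bx_t)\|\le 2G$ and $\|\bx_{t+1}-\bx_t\|=\eta_t\|\hat g^{clip}_t\|\le\eta_t G$; these crude bounds, both consequences of clipping, are what make the later terms controllable.

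Next I use exactly the four-term decomposition of $\hat g_{t+1}-\nabla F(\bx_{t+1})$ written out after the theorem: a contraction term $(1-\alpha_t)\hat\epsilon_t$, a Hessian-noise term $B:=(1-\alpha_t)(\nabla^2 f(\bx_{t+1},z_{t+1})-\nabla^2 F(\bx_{t+1}))(\bx_{t+1}-\bx_t)$, the second-order Taylor bias $(1-\alpha_t)\delta_t$, and a gradient-noise term $N:=\alpha_t(\nabla f(\bx_{t+1},z_{t+1})-\nabla F(\bx_{t+1}))$. Conditioning on $\mathcal F_t$, the quantities $\hat\epsilon_t$, $\delta_t$, $\bx_{t+1}$ are measurable and $z_{t+1}$ is fresh, so $B$ and $N$ are conditionally mean-zero. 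Writing $P:=(1-\alpha_t)(\hat\epsilon_t+\delta_t)$ and $M:=B+N$, the measurable part is orthogonal to the mean-zero part, giving $\E[\|\hat g_{t+1}-\nabla F(\bx_{t+1})\|^2\mid\mathcal F_t]=\|P\|^2+\E[\|M\|^2\mid\mathcal F_t]$; this identity holds even though $B$ and $N$ share the sample $z_{t+1}$.

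I then bound each block into the template $\E[\|\hat\epsilon_{t+1}\|^2\mid\mathcal F_t]\le c_t\|\hat\epsilon_t\|^2+a_t\|\nabla F(\bx_t)\|^2+b_t$. In $\|P\|^2=(1-\alpha_t)^2\|\hat\epsilon_t+\delta_t\|^2$, the pure $\|\delta_t\|^2$ piece is controlled through (\ref{eqn:rhosmoothness}): the Taylor remainder obeys $\|\delta_t\|\le\frac{\rho}{2}\|\bx_{t+1}-\bx_t\|^2\le\frac{\rho}{2}\eta_t^2 G^2$, contributing an $O(\rho^2\eta_t^4 G^4)$ constant---and this is precisely the step that needs clipping, since an unclipped $\|\hat g_t\|^4$ is uncontrollable under our moment assumptions. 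The cross term $2(1-\alpha_t)^2\langle\hat\epsilon_t,\delta_t\rangle$ is \emph{not} charged to the constant; instead I bound $\|\delta_t\|\le\frac{\rho}{2}\eta_t^2\|\hat g^{clip}_t\|^2\le\rho\eta_t^2(\|\hat\epsilon_t\|^2+\|\nabla F(\bx_t)\|^2)$ and use $\|\hat\epsilon_t\|\le 2G$ to turn it into $O(\rho G\eta_t^2)(\|\hat\epsilon_t\|^2+\|\nabla F(\bx_t)\|^2)$, i.e.\ into the $c_t$ and $a_t$ coefficients. For the martingale part, (\ref{eqn:hessvectorvariance}) gives $\E[\|B\|^2\mid\mathcal F_t]\le(1-\alpha_t)^2\sigma_H^2\eta_t^2\|\hat g^{clip}_t\|^2$, which I expand as $\le 2\sigma_H^2\eta_t^2(\|\hat\epsilon_t\|^2+\|\nabla F(\bx_t)\|^2)$ rather than bounding by $G^2$, again feeding $c_t$ and $a_t$; the gradient noise needs no $\sigma_G$ assumption, since variance-below-second-moment yields $\E[\|N\|^2\mid\mathcal F_t]\le\alpha_t^2 G^2$; and the cross term $\langle B,N\rangle$, nonzero because of the shared sample, is split by Young's inequality so as to leave the $\|N\|^2$ contribution essentially intact while charging its partner to the already-present $\|B\|^2$ terms.

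Assembling gives $c_t\approx(1-\alpha_t)^2+O((\sigma_H^2+\rho G)\eta_t^2)$, $a_t=O((\sigma_H^2+\rho G)\eta_t^2)$, and a constant dominated by $\alpha_t^2 G^2$ and $\rho^2\eta_t^4 G^4$. Multiplying by $\frac{6}{5K\eta_{t+1}}$, subtracting $\frac{6}{5K\eta_t}\|\hat\epsilon_t\|^2$, and substituting $\alpha_t=2K\eta_t\eta_{t+1}$ and $\eta_t=1/(Ct^{1/3})$ reduces the goal to three numerical inequalities, which is the main obstacle and where the parameter choices earn their keep. The $\|\nabla F(\bx_t)\|^2$ coefficient becomes $\tfrac{6a_t}{5K\eta_{t+1}}=\Theta((\sigma_H^2+\rho G)\eta_t/K)$, and forcing this below $\frac{\eta_t}{5}$ is exactly what pins $K$ down to the positive root of the quadratic $K^2=16K\sigma_H^2+8\rho^2 G^2$ that appears in its definition. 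The gradient-noise constant gives $\tfrac{6}{5K\eta_{t+1}}\alpha_t^2 G^2\le\frac{24}{5}KG^2\eta_t^3$ (using $\eta_{t+1}\le\eta_t$), producing the first constant verbatim. The delicate inequality is the $\|\hat\epsilon_t\|^2$ coefficient: the discrete derivative $\tfrac{1}{\eta_{t+1}}-\tfrac{1}{\eta_t}=O(C^3\eta_t^2)$ can dominate the leading $-\Theta(\eta_t)$ contraction for small indices (roughly $t\lesssim C^6/K^3$), so there I invoke $\|\hat\epsilon_t\|^2\le 4G^2$ to move the nonnegative excess into the constant---this is the origin of the $\frac{16C^6 G^2}{25K^2}$ term---while for larger $t$ the contraction holds pointwise and $C\ge\sqrt{2K}$, $C\ge 4L$ ensure the target $-\frac{3\eta_t}{4}$ is met. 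Taking total expectations then yields the claim.
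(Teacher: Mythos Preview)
Your overall architecture is correct and matches the paper's: the clipping reduction via nonexpansive projection, the four-term decomposition of the unclipped error, expanding $\|\hat g^{clip}_t\|^2\le 2(\|\hat\epsilon_t\|^2+\|\nabla F(\bx_t)\|^2)$ to feed the Hessian-noise into the $c_t,a_t$ coefficients, bounding the gradient-noise second moment by $G^2$ via (\ref{eqn:gradientbound}), and multiplying through by $\tfrac{6}{5K\eta_{t+1}}$ with $\alpha_t=2K\eta_t\eta_{t+1}$ are all the right moves. For the discrete derivative the paper does not case-split on $t$: it uses $\tfrac{1}{\eta_{t+1}}-\tfrac{1}{\eta_t}\le \tfrac{C^3}{3}\eta_t^2$ and then a single Young split $\eta_t^2\le \tfrac{\lambda}{2}\eta_t+\tfrac{1}{2\lambda}\eta_t^3$ with $\lambda=\tfrac{5K}{4C^3}$, bounding the resulting $\eta_t^3$ piece via $\|\hat\epsilon_t\|^2\le 4G^2$; that is your ``small-$t$'' mechanism applied uniformly and is what yields the constant $\tfrac{16C^6G^2}{25K^2}$ exactly.

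The genuine gap is your treatment of the bias cross term $2(1-\alpha_t)^2\langle\hat\epsilon_t,\delta_t\rangle$. Your bound $|\langle\hat\epsilon_t,\delta_t\rangle|\le\|\hat\epsilon_t\|\,\|\delta_t\|\le 2G\cdot\rho\eta_t^2(\|\hat\epsilon_t\|^2+\|\nabla F(\bx_t)\|^2)$ deposits a term of size $4\rho G\,\eta_t^2$ into $a_t$, so after dividing by $K\eta_{t+1}$ the $\|\nabla F(\bx_t)\|^2$-coefficient constraint is \emph{linear} in $K$, not the quadratic $K^2=16K\sigma_H^2+8\rho^2G^2$ you assert. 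Worse, with explicit constants your route needs $K\ge 24\rho G\cdot\eta_t/\eta_{t+1}\ge 24\rho G$ even when $\sigma_H=0$, whereas the lemma's $K$ equals $2\sqrt{2}\,\rho G$ there---so the stated $K$ does not satisfy your inequality. The paper instead applies Young's inequality $2\langle\hat\epsilon_t,\delta_t\rangle\le\lambda\|\hat\epsilon_t\|^2+\tfrac{1}{\lambda}\|\delta_t\|^2$ with $\lambda=\alpha_t/2$: the $\lambda$-term costs exactly $\tfrac{\alpha_t}{2}$ of contraction, while $\tfrac{1}{\lambda}\|\delta_t\|^2\le \tfrac{\rho^2G^2}{\alpha_t}\eta_t^4\cdot 2(\|\hat\epsilon_t\|^2+\|\nabla F(\bx_t)\|^2)$ contributes $\tfrac{\rho^2G^2}{2K}\eta_t^2$ to $a_t$ after substituting $\alpha_t\approx 2K\eta_t^2$. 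It is precisely this $1/K$ dependence that turns the requirement $a_t\le\alpha_t/12$ into a quadratic in $K$ and makes the stated value admissible. Replace your Cauchy--Schwarz step with this $\lambda=\alpha_t/2$ Young split and the rest of your outline goes through with the exact constants.
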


Let us look into how Lemma \ref{thm:sgderror} is used in our analysis. To prove Theorem \ref{thm:sgdhess}, we use the Lyapunov function defined as $\Phi_t = F(\bx_t) +\frac{6}{5K\eta_t}\|\hat\epsilon_t\|^2$ to bound the $\E[\|\nabla F(\bx_t)\|^2]$ term. Then:
\begin{align}
     \E[\Phi_{t+1}-\Phi_t]&= \E\left[F(\bx_{t+1})-F(\bx_t) + \frac{6}{5K\eta_{t+1}}\|\hat\epsilon_{t+1}\|^2-\frac{6}{5K\eta_t}\|\hat\epsilon_t\|^2\right] \nonumber  \\
     &\le  \E\left[-\frac{\eta_t}{4}\|\nabla F(\bx_t)\|^2 + \frac{3\eta_t}{4}\|\hat\epsilon_t\|^2+ \frac{6}{5K\eta_{t+1}}\|\hat\epsilon_{t+1}\|^2-\frac{6}{5K\eta_t}\|\hat\epsilon_t\|^2\right] \label{potential}
\end{align}
where the inequality comes from Lemma \ref{thm:onestep}. Now if we plug in the result of Lemma \ref{thm:sgderror}, we are able to simplify the bound of (\ref{potential}) by canceling the positive error term $\frac{3\eta_t}{4}\|\hat \epsilon_t\|^2$ while keeping the coefficient of $\|\nabla F(\bx_t)\|^2$ negative. Then, we can move the negative term to the left hand side and derive a bound for $\E[\|\nabla F(\bx_t)\|^2]$. With Lemma \ref{thm:onestep} and Lemma \ref{thm:sgderror} in hand, we are now ready to prove Theorem \ref{thm:sgdhess}.


\begin{proof}[Proof of Theorem \ref{thm:sgdhess}]
We define the potential:
\begin{align*}
    \Phi_t = F(\bx_t) +\frac{6}{5K\eta_t }\|\hat\epsilon_t\|^2
\end{align*}
We will then show that $\Phi_t$ roughly decreases with $t$ at rate that depends on $\|\nabla F(\bx_t)\|^2$. Specifically:
\begin{align*}
    \E[\Phi_{t+1}-\Phi_t]&= \E\left[F(\bx_{t+1})-F(\bx_t) + \frac{6}{5K\eta_{t+1} }\|\hat\epsilon_{t+1}\|^2-\frac{6}{5K\eta_t }\|\hat\epsilon_t\|^2\right]
    \intertext{applying Lemmas \ref{thm:onestep} and \ref{thm:sgderror}:}
    &\le -\frac{\eta_t}{4}\E[\|\nabla F(\bx_t)\|^2] + \frac{3\eta_t}{4}\E[\|\hat\epsilon_t\|^2] -\frac{3\eta_t}{4}\E[\|\hat\epsilon_t\|^2]\\
    &\quad\quad + \frac{\eta_t}{5}\E[\|\nabla F(\bx_t)\|^2] + \eta_t^3G^2\left(\frac{24}{5}K + \frac{16C^6}{25 K^2}\right)\\
    &=\frac{-\eta_t}{20}\E[\|\nabla F(\bx_t)\|^2] + \eta_t^3 G^2(\frac{24}{5}K + \frac{16C^6}{25 K^2})
    \intertext{Let $ D = \frac{24}{5}K + \frac{16C^6}{25 K^2}$:}
    &= \frac{-\eta_t}{20}\E[\|\nabla F(\bx_t)\|^2] + \eta_t^3G^2D
    \intertext{Now sum over $t$, and use $\eta_t\ge \eta_T$ for all $t$:}
    \E[\Phi_{T+1} - \Phi_1]&\le \frac{-\eta_T}{20}\sum_{t=1}^T \E[\|\nabla F(\bx_t)\|^2] +  G^2D\sum_{t=1}^T \eta_t^3\\
     \sum_{t=1}^T \E[\|\nabla F(\bx_t)\|^2] &\le \frac{20}{\eta_T}\E[\Phi_1-\Phi_{T+1}] + \frac{20G^2D\sum_{t=1}^T \eta_t^3}{\eta_T}
\intertext{Now, observe that:}
    \E[\Phi_1-\Phi_{T+1}]&= \E[F(\bx_1)-F(\bx_{T+1}) + \frac{6}{5K\eta_1}\|\hat\epsilon_1\|^2 - \frac{6}{5K\eta_{T+1}}\|\hat\epsilon_{T+1}\|^2]\\
    &\le \Delta + \frac{24G^2}{5K\eta_1} \le \Delta +\frac{24CG^2}{5K}
\intertext{where in the second line we used $\E[\|\hat\epsilon_1\|^2]\le 4G^2$. Also, we have:}
\sum_{t=1}^T \eta_t^3 &= \frac{1}{C^3}\sum_{t=1}^T\frac{1}{t}\le \frac{1+\log(T)}{C^3}
\intertext{Putting all this together yields:}
    \frac{1}{T}\sum_{t=1}^T \E[\|\nabla F(\bx_t)\|^2] &\le \frac{20}{T\eta_T}\E[\Phi_1-\Phi_{T+1}] + \frac{20G^2D\sum_{t=1}^T \eta_t^3}{T\eta_T}\\
    &\le \frac{20C\Delta +96C^2G^2/K}{T^{2/3}} + \frac{20G^2D(1+\log(T))}{C^2T^{2/3}}
\end{align*}
\end{proof}
\section{Normalized SGD with Hessian-corrected momentum }\label{sec:normalized}
\begin{algorithm}
   \caption{Normalized SGD with Hessian-corrected Momentum (\textbf{N-SGDHess})}
   \label{alg:normalized}
   \begin{algorithmic}
      \STATE{\bfseries Input: } Initial Point $\bx_1$, learning rates $\eta$, momentum parameters $\alpha$, time horizon $T$, parameter $G$:
      \STATE Sample $z_1\sim P_z$.
      \STATE $\hat g_1 \leftarrow \nabla f(\bx_1, z_1)$.
      \STATE  $\bx_2 \leftarrow \bx_1 - \eta\frac{\hat g_1}{\|\hat g_1\|}$
      \FOR{$t=2\dots T$}
      \STATE Sample $z_t\sim P_z$.
      \STATE  $\hat g_t \leftarrow (1-\alpha) (\hat g_{t-1} + \nabla^2 f(\bx_t, z_t)(\bx_t - \bx_{t-1}))+ \alpha\nabla f(\bx_t, z_t)$.
     \STATE  $\bx_{t+1} \leftarrow \bx_t  - \eta \frac{\hat g_t}{\|\hat g_t\|}$.
      \ENDFOR
      \STATE Return $\hat x$ uniformly at random from $\bx_1,\dots,\bx_T$ (in practice $\hat x = \bx_T$).
   \end{algorithmic}
\end{algorithm}

In this section, we introduce an algorithm that dispenses with the assumption (\ref{eqn:gradientbound}) required by Algorithm \ref{alg:sgdhess}. This method (Algorithm \ref{alg:normalized}) uses SGD with normalized updates and Hessian-vector product-based momentum. We will show that normalization can significantly simplify the analysis from section \ref{sec:algorithm1} while still maintaining $O(\epsilon^{-3})$ convergence rate. Indeed, the technical term in the analysis of Algorithm \ref{alg:sgdhess} that required us to enforce a bound on the updates via clipping simply does not appear because the updates are automatically bounded by $\eta$.

To get started, we need the following Lemma, which is essentially identical to Lemma 2 of \citep{cutkosky2020momentum}, with slightly improved constants. We provide the proof in the Appendix for completeness.
\begin{restatable}{Lemma}{normgradient}
\label{normgradient}
Define:
\begin{align*}
    \hat \epsilon_t = \hat g_t - \nabla F(\bx_t)
\end{align*}
Suppose $\bx_1, \dots, \bx_T$ is a sequence of iterates defined by $\bx_{t+1} = \bx_t - \frac{\hat g_t}{\|\hat g_t\|} $ for some arbitrary sequence $\hat g_1,\dots,\hat g_T$. Then if $\bx_t$ is chosen uniformly at random from $\bx_1, \dots, \bx_T$, we have:
\begin{align*}
    \E[\|\nabla F(\bx_t)\|] &\le \frac{3\Delta}{2\eta T} + \frac{3L\eta}{4} +\frac{3}{T}\sum_{t=1}^{T} \|\hat\epsilon_t\|
\end{align*}
\end{restatable}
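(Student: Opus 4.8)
The plan is to run the standard ``descent lemma'' argument specialized to normalized updates. Since $\bx_{t+1} = \bx_t - \eta\,\hat g_t/\|\hat g_t\|$, every step has the fixed length $\|\bx_{t+1}-\bx_t\| = \eta$, so applying $L$-smoothness of $F$ (\ref{eqn:Lsmooth}) to the consecutive iterates gives
\begin{align*}
F(\bx_{t+1}) \le F(\bx_t) - \eta\left\langle \nabla F(\bx_t), \frac{\hat g_t}{\|\hat g_t\|}\right\rangle + \frac{L\eta^2}{2}.
\end{align*}
Everything then reduces to lower bounding $\langle \nabla F(\bx_t),\hat g_t/\|\hat g_t\|\rangle$ in terms of $\|\nabla F(\bx_t)\|$ and the momentum error $\|\hat\epsilon_t\|$.

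For that key step I would write $\hat g_t = \nabla F(\bx_t) + \hat\epsilon_t$, so that $\langle \nabla F(\bx_t),\hat g_t/\|\hat g_t\|\rangle = \|\hat g_t\| - \langle\hat\epsilon_t,\hat g_t/\|\hat g_t\|\rangle \ge \|\hat g_t\| - \|\hat\epsilon_t\|$ by Cauchy--Schwarz, and then $\|\hat g_t\| \ge \|\nabla F(\bx_t)\| - \|\hat\epsilon_t\|$ by the triangle inequality; combining, $\langle\nabla F(\bx_t),\hat g_t/\|\hat g_t\|\rangle \ge \|\nabla F(\bx_t)\| - 2\|\hat\epsilon_t\|$. (If $\hat g_t = 0$ the normalized direction is conventionally arbitrary, but then $\|\nabla F(\bx_t)\| = \|\hat\epsilon_t\|$ and the inequality holds for any unit vector.) Substituting back yields
\begin{align*}
F(\bx_{t+1}) \le F(\bx_t) - \eta\|\nabla F(\bx_t)\| + 2\eta\|\hat\epsilon_t\| + \frac{L\eta^2}{2}.
\end{align*}
The essential feature here, and the reason normalization avoids the clipping/fourth-moment machinery of Section \ref{sec:algorithm1}, is that $\|\hat\epsilon_t\|$ enters \emph{linearly} rather than squared.

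Finally I would isolate $\eta\|\nabla F(\bx_t)\|$, sum over $t = 1,\dots,T$, telescope the $F(\bx_t)-F(\bx_{t+1})$ terms and bound their sum by $F(\bx_1) - \inf_{\bx}F(\bx) \le \Delta$ via (\ref{eqn:diameter}), and divide by $\eta T$:
\begin{align*}
\frac{1}{T}\sum_{t=1}^T \|\nabla F(\bx_t)\| \le \frac{\Delta}{\eta T} + \frac{L\eta}{2} + \frac{2}{T}\sum_{t=1}^T \|\hat\epsilon_t\|.
\end{align*}
Since $\bx_t$ is drawn uniformly from $\bx_1,\dots,\bx_T$, the left-hand side is exactly $\E[\|\nabla F(\bx_t)\|]$, which gives the claim — in fact with slightly better constants than the stated $\tfrac32,\tfrac34,3$; those precise constants, if wanted, can be recovered by a marginally more wasteful case split on whether $\|\hat\epsilon_t\| \le \tfrac12\|\nabla F(\bx_t)\|$, mirroring Lemma 2 of \citep{cutkosky2020momentum}.

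I do not anticipate a real obstacle: the only mildly delicate point is keeping the inner-product lower bound valid and useful when $\hat g_t$ is tiny or $\|\hat\epsilon_t\|$ is large, which the triangle-inequality estimate above handles automatically (it may become negative, but that is harmless since we only need an upper bound on $-\langle\nabla F(\bx_t),\hat g_t/\|\hat g_t\|\rangle$). The remaining work is the telescoping sum and unpacking the definition of the uniformly random output point.
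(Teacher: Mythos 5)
Your proof is correct, and it reaches the stated bound (in fact a strictly stronger one, with constants $1,\tfrac12,2$ in place of $\tfrac32,\tfrac34,3$). The overall skeleton — descent lemma with step length exactly $\eta$, isolate the inner product, telescope, divide by $\eta T$, identify the average with the expectation over the uniformly random index — is the same as the paper's. Where you genuinely diverge is the key lower bound on $\langle \nabla F(\bx_t), \hat g_t/\|\hat g_t\|\rangle$: you substitute $\nabla F(\bx_t) = \hat g_t - \hat\epsilon_t$ and apply Cauchy--Schwarz plus the triangle inequality to get $\|\hat g_t\| - \|\hat\epsilon_t\| \ge \|\nabla F(\bx_t)\| - 2\|\hat\epsilon_t\|$ in one line, with no case analysis. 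The paper instead splits on whether $\|\hat\epsilon_t\| \le \tfrac12\|\nabla F(\bx_t)\|$, bounding $-\langle\nabla F(\bx_t),\hat g_t/\|\hat g_t\|\rangle \le -\tfrac23\|\nabla F(\bx_t)\| + 2\|\hat\epsilon_t\|$ in each case, which is what produces its looser constants; moreover, in the paper's second case the intermediate claim $-\langle\nabla F(\bx_t),\hat g_t/\|\hat g_t\|\rangle \le 0$ is not justified (that inner product can be as large as $+\|\nabla F(\bx_t)\|$ when the error dominates), so your uniform algebraic estimate is not only shorter but also sidesteps the one fragile step in the paper's argument. Your handling of the degenerate $\hat g_t = 0$ case and of the (evidently typographical) missing $\eta$ in the update rule of the lemma statement are both appropriate. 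Nothing further is needed.
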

Now, we are able to apply this Lemma to analyze Algorithm \ref{alg:normalized}:
\begin{restatable}{Theorem}{normalizedhess}\label{normalizedhess}
Assuming (\ref{eqn:diameter}), (\ref{eqn:Lsmooth}), (\ref{eqn:gradvariance}), (\ref{eqn:hessvectorvariance}), and (\ref{eqn:rhosmoothness}) hold (but \emph{not} assuming (\ref{eqn:gradientbound})), with $ \alpha = \min \{ \max \{ \frac{1}{T^{2/3}}, \frac{\Delta^{4/5}\rho^{2/5}}{T^{4/5}\sigma_G^{6/5}}, \frac{(2\Delta\sigma_H)^{2/3}}{T^{2/3}\sigma_G^{4/3}} \}, 1 \}$ and $\eta = \min \{ \frac{\sqrt{2\Delta}\alpha^{1/4}}{\sqrt{T(L\sqrt{\alpha}+4\sigma_H})}, \frac{(\Delta\alpha)^{1/3}}{(\rho T)^{1/3}} \}$,   Algorithm \ref{alg:normalized} guarantees
\begin{align*}
    \E[\|\nabla F(\bx_t)\|] &\le \frac{6\sigma_G+ 54^{1/3}(\Delta \sigma_H)^{1/3}\sigma_G^{1/3}}{T^{1/3}} +\frac{6\sigma_G^{2/5}\Delta^{2/5}\rho^{1/5}}{T^{2/5}}\\
    &\quad\quad +\frac{\sqrt{9\Delta L} + \sqrt{72\Delta\sigma_H}}{\sqrt{T}} +\frac{6\Delta^{2/3}\rho^{1/3}}{T^{2/3}}+ \frac{\sqrt{18\Delta\sigma_H}}{T^{3/2}} + \frac{3\Delta^{2/3}\rho^{1/3}}{T^{5/3}}
\end{align*}
In words, Algorithm \ref{alg:normalized} achieves $O(1/T^{1/3})$ with large $\sigma_H$ and $\sigma_G$, and achieves $O(1/\sqrt{T})$ in noiseless case, without requiring a Lipschitz bound on the objective.

\end{restatable}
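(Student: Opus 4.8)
The plan is to feed a second-moment bound on the momentum error into Lemma~\ref{normgradient}. Writing $\hat\epsilon_t = \hat g_t - \nabla F(\bx_t)$, that lemma already gives $\E[\|\nabla F(\bx_t)\|] \le \frac{3\Delta}{2\eta T} + \frac{3L\eta}{4} + \frac{3}{T}\sum_{t=1}^T \E[\|\hat\epsilon_t\|]$, so the whole problem reduces to bounding $\frac{1}{T}\sum_t \E[\|\hat\epsilon_t\|]$. The structural simplification over Section~\ref{sec:algorithm1} is that the normalized update makes $\|\bx_{t+1}-\bx_t\| = \eta$ \emph{exactly}, so there is no clipping and no need to control a fourth moment of $\hat g_t$.

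First I would derive the error recursion. Substituting $\bx_{t+1}-\bx_t = -\eta\,\hat g_t/\|\hat g_t\|$ into the momentum update and subtracting $\nabla F(\bx_{t+1})$ gives
\begin{align*}
\hat\epsilon_{t+1} = (1-\alpha)\hat\epsilon_t + (1-\alpha)\delta_t + (1-\alpha)S_t + \alpha\,\epsilon^G_{t+1},
\end{align*}
with $\delta_t = \nabla F(\bx_t) - \nabla F(\bx_{t+1}) + \nabla^2 F(\bx_{t+1})(\bx_{t+1}-\bx_t)$ the second-order Taylor remainder, $S_t = (\nabla^2 f(\bx_{t+1},z_{t+1}) - \nabla^2 F(\bx_{t+1}))(\bx_{t+1}-\bx_t)$ the Hessian-vector noise, and $\epsilon^G_{t+1} = \nabla f(\bx_{t+1},z_{t+1}) - \nabla F(\bx_{t+1})$ the gradient noise. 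Because $\bx_{t+1}$ is measurable with respect to the history $\mathcal{F}_t$ through step $t$ while $z_{t+1}$ is fresh, both $S_t$ and $\epsilon^G_{t+1}$ are mean-zero given $\mathcal{F}_t$ (even though they are correlated through $z_{t+1}$); together with $\|\delta_t\| \le \frac{\rho}{2}\eta^2$ from~(\ref{eqn:rhosmoothness}), $\E[\|S_t\|^2\mid\mathcal{F}_t] \le \sigma_H^2\eta^2$ from~(\ref{eqn:hessvectorvariance}), and $\E[\|\epsilon^G_{t+1}\|^2\mid\mathcal{F}_t] \le \sigma_G^2$ from~(\ref{eqn:gradvariance}), expanding the square and dropping the vanishing cross-term yields
\begin{align*}
\E[\|\hat\epsilon_{t+1}\|^2\mid\mathcal{F}_t] \le (1-\alpha)^2\|\hat\epsilon_t + \delta_t\|^2 + \big((1-\alpha)\sigma_H\eta + \alpha\sigma_G\big)^2.
\end{align*}
Then I would apply Young's inequality $\|\hat\epsilon_t+\delta_t\|^2 \le (1+\alpha)\|\hat\epsilon_t\|^2 + (1+\tfrac1\alpha)\|\delta_t\|^2$ together with the elementary bound $(1-\alpha)^2(1+\alpha) \le 1-\alpha$: this is precisely the step that upgrades the naive $(1-\alpha)^2$ factor into a genuine $(1-\alpha)$ geometric contraction, at the price of a $\tfrac1\alpha$ blow-up of the (tiny, $O(\eta^4)$) $\delta_t$-term. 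Taking full expectations produces a recursion of the form $\E[\|\hat\epsilon_{t+1}\|^2] \le (1-\alpha)\E[\|\hat\epsilon_t\|^2] + \frac{\rho^2\eta^4}{2\alpha} + 2\sigma_H^2\eta^2 + 2\alpha^2\sigma_G^2$.

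Unrolling this with $\E[\|\hat\epsilon_1\|^2] \le \sigma_G^2$ gives $\E[\|\hat\epsilon_t\|^2] \le (1-\alpha)^{t-1}\sigma_G^2 + \frac{\rho^2\eta^4}{2\alpha^2} + \frac{2\sigma_H^2\eta^2}{\alpha} + 2\alpha\sigma_G^2$. The important move here is to pass to $\E[\|\hat\epsilon_t\|] \le \sqrt{\E[\|\hat\epsilon_t\|^2]}$ \emph{per index} $t$, split the square root of the four-term sum into a sum of square roots, and only then average over $t$: the geometric term sums to $\sum_t (1-\alpha)^{(t-1)/2} = O(1/\alpha)$ and hence contributes only $O(\sigma_G/(\alpha T))$, while the already-quadratic noise terms $\sigma_H^2\eta^2$ and $\alpha^2\sigma_G^2$ contribute only $O(\sigma_H\eta/\sqrt\alpha)$ and $O(\sqrt\alpha\,\sigma_G)$. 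Plugging into Lemma~\ref{normgradient} yields
\begin{align*}
\E[\|\nabla F(\bx_t)\|] = O\!\left(\frac{\Delta}{\eta T} + L\eta + \frac{\rho\eta^2}{\alpha} + \frac{\sigma_H\eta}{\sqrt\alpha} + \frac{\sigma_G}{\alpha T} + \sqrt\alpha\,\sigma_G\right).
\end{align*}

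The last step is a two-stage tuning. Balancing $\frac{\Delta}{\eta T}$ against $L\eta + \frac{\sigma_H\eta}{\sqrt\alpha}$ gives $\eta \asymp \sqrt\Delta\,\alpha^{1/4}/\sqrt{T(L\sqrt\alpha + \sigma_H)}$, and against $\frac{\rho\eta^2}{\alpha}$ gives $\eta \asymp (\Delta\alpha/(\rho T))^{1/3}$, so one sets $\eta$ to the minimum of the two (the stated choice, up to constants); substituting collapses all $\eta$-dependent terms to $O\!\big(\sqrt{L\Delta/T} + \sqrt{\sigma_H\Delta/(\sqrt\alpha\,T)} + (\rho\Delta^2)^{1/3}\alpha^{-1/3}T^{-2/3}\big)$. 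Then balancing the increasing term $\sqrt\alpha\,\sigma_G$ against $\sigma_G/(\alpha T)$, $\sqrt{\sigma_H\Delta/(\sqrt\alpha\,T)}$, and $(\rho\Delta^2)^{1/3}\alpha^{-1/3}T^{-2/3}$ respectively produces the three candidate values $T^{-2/3}$, $\frac{(\Delta\sigma_H)^{2/3}}{\sigma_G^{4/3}T^{2/3}}$, and $\frac{\Delta^{4/5}\rho^{2/5}}{\sigma_G^{6/5}T^{4/5}}$; choosing $\alpha$ to be the maximum of these (capped at $1$) makes each decreasing term no larger than $\sqrt\alpha\,\sigma_G$, and substituting the three candidates into $\sqrt\alpha\,\sigma_G$ reproduces the main error terms $\sigma_G T^{-1/3}$, $(\Delta\sigma_H)^{1/3}\sigma_G^{1/3}T^{-1/3}$, $\rho^{1/5}\Delta^{2/5}\sigma_G^{2/5}T^{-2/5}$ in the statement, while the cap $\alpha=1$ (forced as $\sigma_G,\sigma_H\to 0$) reduces the algorithm to plain normalized SGD and yields the $O(\sqrt{L\Delta/T})$ noiseless rate. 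I expect the main obstacle to be getting the error recursion exactly right — in particular the Young's-inequality step converting $(1-\alpha)^2$ into $(1-\alpha)$, and the observation that the $\sigma_G$ noise enters at order $\alpha^2$ so that it costs only $\sqrt\alpha$ (rather than a constant) after unrolling — since everything downstream is a lengthy but mechanical computation.
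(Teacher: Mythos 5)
Your proposal is correct and follows the paper's skeleton — reduce everything to bounding $\frac{1}{T}\sum_t\E[\|\hat\epsilon_t\|]$ via Lemma \ref{normgradient}, decompose the error recursion into the Taylor remainder $\delta_t$, the Hessian-vector noise, and the gradient noise, and then run the same two-stage tuning of $\eta$ and $\alpha$ — but the way you control the error term is genuinely different from the paper's. The paper unrolls the \emph{vector-valued} recursion $\hat\epsilon_t = (1-\alpha)(\hat\epsilon_{t-1} + \eta w_t + \delta_t) + \alpha v_t$ all the way back to $t=1$ and exploits the martingale structure of the accumulated noise sums directly: since the $w_{t-\tau}$ (resp.\ $v_{t-\tau}$) are conditionally mean-zero, $\E\bigl[\bigl\|\sum_\tau(1-\alpha)^{\tau}w_{t-\tau}\bigr\|\bigr]\le\sigma_H\sqrt{\sum_\tau(1-\alpha)^{2\tau}}\le\sigma_H/\sqrt{\alpha}$, which is exactly where the $\eta\sigma_H/\sqrt{\alpha}$ and $\sigma_G\sqrt{\alpha}$ terms come from. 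You instead run a one-step second-moment contraction, using Young's inequality and $(1-\alpha)^2(1+\alpha)\le 1-\alpha$ to get $\E[\|\hat\epsilon_{t+1}\|^2]\le(1-\alpha)\E[\|\hat\epsilon_t\|^2]+O(\rho^2\eta^4/\alpha+\sigma_H^2\eta^2+\alpha^2\sigma_G^2)$, unroll the scalar recursion, and pass to first moments by Jensen at the end. Both capture the same variance cancellation and both yield the claimed rates; your version is arguably more mechanical (no reasoning about correlated unrolled sums, and it is essentially the same machinery the paper already uses for Lemma \ref{thm:sgderror}), but it is slightly lossier in constants — the geometric term decays as $(1-\alpha)^{(t-1)/2}$ rather than $(1-\alpha)^{t-1}$ (costing a factor $2$ in $\sum_t$), and splitting $\sqrt{a+b+c+d}$ and the bound $((1-\alpha)\sigma_H\eta+\alpha\sigma_G)^2\le 2\sigma_H^2\eta^2+2\alpha^2\sigma_G^2$ each cost factors of $\sqrt{2}$ — so you would prove the theorem with somewhat larger numerical constants than the ones stated (which the paper obtains by feeding its exact coefficients into Lemma \ref{normalizedbound}). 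The asymptotic claims, including the $O(1/\sqrt{T})$ noiseless rate when $\alpha$ is capped at $1$, go through unchanged.
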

\section{Adaptive SGD with Hessian-corrected Momentum} \label{adaptive}

\begin{algorithm}
   \caption{Adaptive learning rate for SGD with Hessian-corrected Momentum}
   \label{alg:adaptive}
  \begin{algorithmic}
      \STATE{\bfseries Input: } Initial Point $\bx_1$, parameters $c$, $w$,  $\alpha_t$, time horizon $T$, parameter $G$:
      \STATE Sample $z_1\sim P_z$.
      \STATE $\hat g_1 \leftarrow \nabla f(\bx_1, z_1)$
      \STATE  $G_1 \leftarrow \|\nabla f(\bx_1, z_1)\|$.
      \STATE $\eta_1 \leftarrow \frac{c}{w^{1/3}}$
      \STATE $\bx_2 \leftarrow \bx_1 - \eta_1\hat g_1$
      \FOR{$t=2\dots T$}
      \STATE Sample $z_t\sim P_z$.
      \STATE  $G_1 \leftarrow \|\nabla f(\bx_t, z_t)\|$
      \STATE $\hat g_t \leftarrow (1-\alpha_{t-1}) (\hat g^{clip}_{t-1} + \nabla^2 f(\bx_t, z_t)(\bx_t - \bx_{t-1}))+ \alpha_{t-1}\nabla f(\bx_t, z_t)$.
      \STATE  $\hat g^{clip}_t \leftarrow  \hat g_t$ if $\|\hat g_t\| \le G$; otherwise, $\hat g^{clip}_t \leftarrow G\frac{\hat g_t}{\|\hat g_t\|}$
    \STATE  $\eta_t \leftarrow \frac{c}{(w+\sum_{i=1}^{t-2}G_i^2)^{1/3}}$ \text{\ \ \ \ (set $\eta_2=\frac{c}{w^{1/3}}$)}.
      \STATE $\bx_{t+1} \leftarrow \bx_t  - \eta_t \hat g^{clip}_{t}$.
      \ENDFOR
      \STATE Return $\hat x$ uniformly at random from $\bx_1,\dots,\bx_T$ (in practice $\hat x = \bx_T$).
   \end{algorithmic}
\end{algorithm}
In previous sections, we have presented two different versions of our algorithm. Both algorithms achieve the worst-case optimal $O(1/T^{1/3})$ convergence rate. However, the bounds are non-adaptive and remain $O(1/T^{1/3})$ even in noiseless settings (e.g. if $\sigma_G=0$). The bound for the normalized SGD algorithm of Section \ref{sec:normalized} is better in the sense that when the noise is negligible, we provide an explicit tuning of the learning rate to achieve $O(1/\sqrt{T})$, but this requires us to set the parameters $\eta$ and $\alpha$ based on prior knowledge of $\sigma_G$. In this section, we will describe an \emph{adaptive} version of our algorithm that has the best of both worlds. The algorithm doesn't require the knowledge of $\sigma_G$ but still automatically improves to a tighter bound whenever $\sigma_G$ is small. 
\begin{restatable}{Theorem}{adaptive}\label{thm:adaptive}
With $K = \frac{2G^2\rho^2}{-2\sigma_H^2+\sqrt{4\sigma_H^4+\frac{\rho^2G^2}{2}}}$, $\eta_t = \frac{c}{(w+ \sum_{i=1}^{t-2} G_i^2)^{1/3}}$ with $c \le \frac{2G^{2/3}}{\sqrt{K}}$ and $w = \max\{(4Lc)^3, 3G^2\}$, $\alpha_t = 2K\eta_t\eta_{t+1}$. Algorithm \ref{alg:adaptive} guarantees:
\begin{align*}
    \E\left[\frac{1}{T}\sum_{t=1}^T\|\nabla F(\bx_t)\|\right] \le \frac{w^{1/6}\sqrt{2M}+2M^{3/4}}{\sqrt{T}} +\frac{2\sigma_G^{1/3}}{T^{1/3}}
\end{align*}
Where $M = \frac{1}{c}\left(20(\Delta + \frac{6\sigma_G^2w^{1/3}}{5Kc}) + 96Kc^2\ln(T+1)+  \frac{64G^4}{5K^2c^3} \ln T\right)$.
\end{restatable}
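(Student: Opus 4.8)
The plan is to follow the Lyapunov argument of Theorem~\ref{thm:sgdhess}, modified to accommodate the data-dependent learning rate. Keep the potential $\Phi_t = F(\bx_t) + \frac{6}{5K\eta_t}\|\hat\epsilon_t\|^2$ with $\hat\epsilon_t = \hat g^{clip}_t - \nabla F(\bx_t)$. Two observations let us reuse the machinery of Section~\ref{sec:algorithm1}: (i) $\eta_t$ is determined by $G_1,\dots,G_{t-2}$, hence it is previsible and non-increasing pathwise, and $w\ge(4Lc)^3$ forces $\eta_t\le c/w^{1/3}\le 1/(4L)$ deterministically, so the descent step behind Lemma~\ref{thm:onestep}, $F(\bx_{t+1})-F(\bx_t)\le-\frac{\eta_t}{4}\|\nabla F(\bx_t)\|^2+\frac{3\eta_t}{4}\|\hat\epsilon_t\|^2$, still holds term by term (its proof uses only $\eta_t\le 1/(4L)$ pathwise); (ii) $\|\nabla F(\bx_t)\|\le G$ by Jensen's inequality and (\ref{eqn:gradientbound}), and the clip radius is $G$, so $\|\hat\epsilon_t\|\le\|\hat g_t-\nabla F(\bx_t)\|$ and $\|\bx_{t+1}-\bx_t\|\le\eta_t G$, exactly as needed in Lemma~\ref{thm:sgderror}.

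The first real step is an adaptive analogue of Lemma~\ref{thm:sgderror}. Expanding the momentum recursion as in the discussion after Theorem~\ref{thm:sgdhess} gives $\hat\epsilon_{t+1}=(1-\alpha_t)(\hat\epsilon_t+\delta_t+A_{t+1})+\alpha_t\epsilon^G_{t+1}$, where $A_{t+1}$ is the conditionally zero-mean Hessian-noise term with $\E\|A_{t+1}\|^2\le\sigma_H^2\eta_t^2\|\hat g^{clip}_t\|^2$, $\delta_t$ the second-order Taylor remainder with $\|\delta_t\|\le\frac{\rho}{2}\eta_t^2\|\hat g^{clip}_t\|^2$, and $\epsilon^G_{t+1}$ the zero-mean gradient noise with $\E\|\epsilon^G_{t+1}\|^2\le\sigma_G^2$. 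Taking conditional expectations, bounding the cross term $\langle\hat\epsilon_t,\delta_t\rangle$ with a small slack, and multiplying by $\frac{6}{5K\eta_{t+1}}$, the only structural change relative to the non-adaptive proof is that $\eta_{t+1}^{-3}-\eta_t^{-3}=c^{-3}G_{t-1}^2$ rather than a constant; the constraints $c\le 2G^{2/3}/\sqrt K$, $w\ge 3G^2$ and $G_{t-1}\le G$ keep $\alpha_t=2K\eta_t\eta_{t+1}\le 1$ and the ratio $\eta_t/\eta_{t+1}$ bounded, which is exactly what drops the coefficient of $\|\hat\epsilon_t\|^2$ below $\frac{6}{5K\eta_t}-\frac{3\eta_t}{4}$ and lets us absorb the $\|\nabla F(\bx_t)\|^2$-content of $A_{t+1}$ and $\delta_t$ (via $\|\hat g^{clip}_t\|^2\le 2\|\nabla F(\bx_t)\|^2+2\|\hat\epsilon_t\|^2$) into a $\frac{\eta_t}{5}\|\nabla F(\bx_t)\|^2$ budget, using $K^2\ge 3\rho^2G^2$ (a consequence of the defining choice of $K$). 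Collecting the leftovers yields
\[
\tfrac{6}{5K\eta_{t+1}}\E\|\hat\epsilon_{t+1}\|^2-\tfrac{6}{5K\eta_t}\E\|\hat\epsilon_t\|^2\le-\tfrac{3\eta_t}{4}\E\|\hat\epsilon_t\|^2+\tfrac{\eta_t}{5}\E\|\nabla F(\bx_t)\|^2+\E[\mathrm{junk}_t],
\]
where $\mathrm{junk}_t$ is a combination of $Kc^2$- and $\frac{G^4}{K^2c^3}$-weighted copies of $\frac{G_{t-1}^2}{w+S_{t-1}}$ with $S_t:=\sum_{i\le t}G_i^2$ (the $\eta^{-3}$ telescoping produces these data-dependent factors).

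Adding this to the descent inequality gives $\E[\Phi_{t+1}-\Phi_t]\le-\frac{\eta_t}{20}\E\|\nabla F(\bx_t)\|^2+\E[\mathrm{junk}_t]$; summing and using $\E[\Phi_1-\Phi_{T+1}]\le\Delta+\frac{6}{5K\eta_1}\E\|\hat\epsilon_1\|^2\le\Delta+\frac{6\sigma_G^2w^{1/3}}{5Kc}$ (since $\hat\epsilon_1$ is pure gradient noise) together with the AdaGrad-style inequality $\sum_t\frac{G_{t-1}^2}{w+S_{t-1}}\le\ln(1+S_{T-1}/w)\le\ln(T+1)$ (valid because $S_{T-1}\le TG^2$ and $w\ge 3G^2$), we obtain $\E\sum_t\eta_t\|\nabla F(\bx_t)\|^2\le cM$ with $M$ the stated quantity. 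The final step converts this weighted bound into one on $\E[\frac1T\sum_t\|\nabla F(\bx_t)\|]$. Since $\eta_t\ge c/(w+S_T)^{1/3}$, Cauchy--Schwarz gives $\sum_t\|\nabla F(\bx_t)\|\le\sqrt T\,\sqrt{\sum_t\|\nabla F(\bx_t)\|^2}\le\sqrt T\,(w+S_T)^{1/6}c^{-1/2}\sqrt{\sum_t\eta_t\|\nabla F(\bx_t)\|^2}$; taking expectations (Cauchy--Schwarz again for the product, Jensen for the $1/6$ power) and using $\E[G_i^2\mid\mathcal F_{i-1}]\le\|\nabla F(\bx_i)\|^2+\sigma_G^2$ to control $\E S_T$ in terms of $V:=\E\sum_t\|\nabla F(\bx_t)\|^2$ reduces everything to a self-referential inequality of the form $V^3\lesssim M^3(w+V+T\sigma_G^2)$. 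A short case analysis on which of $V$, $w$, $T\sigma_G^2$ dominates, then subadditivity of $x\mapsto x^{1/3}$ and one more use of Jensen (to pass expectation through the square root), produces the $M^{3/4}/\sqrt T$, $w^{1/6}\sqrt M/\sqrt T$ and $\sigma_G^{1/3}/T^{1/3}$ families in the statement.

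The main obstacle is the adaptive error-recursion step: with a random $\eta_t$ the clean $\|\hat\epsilon_t\|^2$ telescoping of Lemma~\ref{thm:sgderror} no longer closes by itself, so one must choose the cross-term slack, verify from the constraints on $c$ and $w$ that $\alpha_t\le 1$ and $\eta_t/\eta_{t+1}$ stay controlled, and — crucially — arrange the leftover so that it has the $\frac{G_{t-1}^2}{w+S_{t-1}}$ form that telescopes to $O(\log T)$ rather than a form that only sums to $O(T)$; getting the constants to land exactly on the stated $M$ is the delicate part. The concluding self-referential cubic, and the bookkeeping relating $\sum_iG_i^2$ to $\sum_i\|\nabla F(\bx_i)\|^2$ through the filtration, is a secondary but still non-trivial point, since a naive Cauchy--Schwarz would leave $\sum_iG_i^2$ uncontrolled on the right-hand side.
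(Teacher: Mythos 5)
Your overall architecture --- the potential $\Phi_t = F(\bx_t)+\frac{6}{5K\eta_t}\|\hat\epsilon_t\|^2$, an adaptive analogue of Lemma \ref{thm:sgderror} in which the telescoping defect $\frac{1}{\eta_{t+1}}-\frac{1}{\eta_t}$ produces terms of the form $\frac{G_{t-1}^2}{w+\sum_{i\le t-2}G_i^2}$ that sum to $O(\ln T)$ via the log-sum lemma, and the resulting weighted bound $\E[\sum_t\eta_t\|\nabla F(\bx_t)\|^2]\le cM$ --- is exactly the paper's (its Lemma \ref{errorLemma}). The gap is in the concluding conversion. You bound $\sum_t\|\nabla F(\bx_t)\|\le\sqrt{T}\,(w+S_T)^{1/6}c^{-1/2}\sqrt{\sum_t\eta_t\|\nabla F(\bx_t)\|^2}$, take expectations, and replace $\E[S_T]$ by $V+T\sigma_G^2$ with $V:=\E\sum_t\|\nabla F(\bx_t)\|^2$, claiming this closes into $V^3\lesssim M^3(w+V+T\sigma_G^2)$. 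It does not: the quantity your chain actually controls is $\E\sum_t\|\nabla F(\bx_t)\|$ (at best $\E[X]$ with $X=\sqrt{\sum_t\|\nabla F(\bx_t)\|^2}$), while the quantity left on the right-hand side is $V=\E[X^2]\ge(\E[X])^2$; Jensen goes the wrong way, so the right-hand side cannot be dominated by a function of the left-hand side and the self-reference never closes. Nor can you bound $V$ directly: $V=\E[\eta_T^{-1}\cdot\eta_T X^2]$ cannot be split by Cauchy--Schwarz without a second moment of $\eta_T\sum_t\eta_t\|\nabla F(\bx_t)\|^2$ that is not controlled, and the crude pathwise bound $\eta_T^{-1}\le(w+TG^2)^{1/3}/c$ gives only $V\le M(w+TG^2)^{1/3}$, which destroys the adaptive $\tilde O(1/\sqrt{T})$ rate in the noiseless case that the theorem is asserting.

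The paper's resolution is to make the self-reference live in $\E[X]$ rather than in $\E[X^2]$: apply Cauchy--Schwarz in the form $(\E[X])^2\le\E[1/\eta_T]\,\E[\eta_T X^2]\le M\,\E[(w+\sum_tG_t^2)^{1/3}]$, split $G_t^2\le2\|\nabla F(\bx_t)\|^2+2\|\zeta_t\|^2$ \emph{pathwise}, use subadditivity of $x\mapsto x^{1/3}$, and only then apply Jensen to the concave map $x\mapsto x^{2/3}$ to get $\E[(\sum_t\|\nabla F(\bx_t)\|^2)^{1/3}]=\E[X^{2/3}]\le(\E[X])^{2/3}$. This yields $(\E[X])^2\le M(w+2T\sigma_G^2)^{1/3}+2^{1/3}M(\E[X])^{2/3}$, a closed inequality in $\E[X]$ alone, whose case analysis produces the $w^{1/6}\sqrt{2M}$, $2M^{3/4}$, and $\sigma_G^{1/3}$ terms in the statement. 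Your remaining ingredients (the descent lemma holding pathwise because $w\ge(4Lc)^3$ forces $\eta_t\le 1/(4L)$, the previsibility of $\eta_t$, the bound $\E[\Phi_1-\Phi_{T+1}]\le\Delta+\frac{6\sigma_G^2w^{1/3}}{5Kc}$) all match the paper.
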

In words, algorithm \ref{alg:adaptive} converges with $O\left(\frac{1}{T^{1/3}}\right)$ rate in noisy case and automatically improves to $O\left(\frac{\ln T}{\sqrt{T}}\right)$ in noiseless case. We defer the proof of Theorem \ref{adaptive} to the appendix.
\section{Experiments}\label{sec:experimental}
\subsection{Setup}
Our algorithm is a simple extension of the official SGD implementation in Pytorch. The Hessian-vector products can be efficiently computed using the automatic differentiation package \citep{paszke2017automatic}: $\nabla^2f(\bx, z)v = \nabla h(\bx)$ where $h(\bx) = \langle \nabla f(\bx,z), v\rangle$. Since Pytorch allows backprogation through the differentiation process itself, this is straightforward to implement. To validate the effectiveness of our proposed algorithm, we perform experiments in two tasks: image classification and neural machine translation on popular deep learning benchmarks. The performance of SGDHess is compared to those of commonly used optimizers such as Adam and SGD as well as AdaHessian, another algorithm that incorporates second-order information. All experiments are run on NVIDIA v100 GPUs.
\begin{figure*}[ht!]
\centering
  \begin{subfigure}
     \centering
     \includegraphics[width=0.45\textwidth]{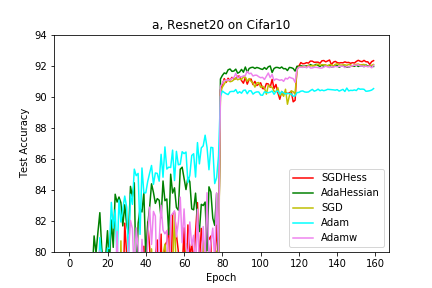}
     \label{fig:resnet20}
    \qquad
  \end{subfigure}
      \begin{subfigure}
     \centering
     \includegraphics[width=0.45\textwidth]{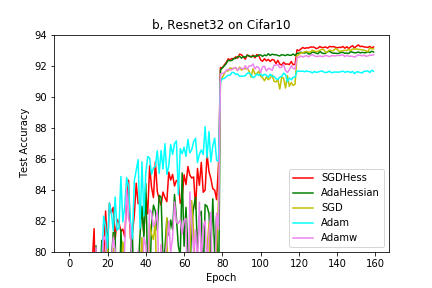}
     \label{fig:resnet32}
  \end{subfigure}
\caption{ (a) SGDHess (Red) achieves the best accuracy among all optimizers (92.46 $\pm 0.07 \%$). (b) Similar to the experiment on Resnet20, SGDHess also converges to the best accuracy (93.19 $\pm 0.08 \% $). For the full plot, refer to the appendix.
    }
  \label{fig:experiments}
\end{figure*}
\subsection{Image Classification}
Our Cifar10 experiment is conducted using the official implementation of AdaHessian. For AdaHessian, we use their recommended values for all the parameters. For the rest of the optimizers, we performed a grid search on the base learning rate $\eta \in \{ 0.001, 0.005, 0.01, 0.05, 0.1, 0.5, 1 \}$ to find the best settings. Similar to the Cifar10 experiment of AdaHessian, we also trained our models on 160 epochs and we ran each optimizer 5 times and reported the average best accuracy as well as the standard deviation (detailed results in the appendix). As we can see from the results in Figure \ref{fig:experiments}, SGDHess outperforms all other optimizers (0.32$\%$ and 0.11$\%$ better than the next best optimizer in Resnet20 and Resnet32 respectively).
 
\begin{table}[h]
  \caption{Top 1 accuracy on Imagenet}
  \break
  \label{imagenet}
  \centering
 \begin{tabular}{|c| c| c|} 
 \hline
SGD & SGDHess & AdaHessian  \\ 
 \hline\hline
70.36 & 70.58 & 69.89  \\ 
 \hline
 \end{tabular}
\end{table}

We also train SGD, SGDHess, and AdaHessian with Imagenet \citep{deng2009imagenet} on Resnet18 to see how well SGDHess perform on a larger-scale benchmark.  We use standard parameter values for SGD (lr = 0.1, momentum = 0.9, weight$\_$decay = 1e-4) for both SGD and SGDHess and the recommended parameters values for AdaHessian. For the learning rate scheduler, we employ the plateau decay scheduler that was used in \citep{yao2020adaHessian}. We train our model in 90 epochs as usual. Even without extensive tuning, SGDHess not only still outperforms SGD (as shown in Table \ref{imagenet}) but also comes close to the state-of-the-art accuracy (70.7$\%$) on this particular task \citep{darknet13}, even though these settings were chosen with SGD in mind rather than SGDHess.

\subsection{Neural Machine Translation}

\begin{table}[h!]
\caption{Bleu Score on IWSLT'14}
\label{bleu}
\centering
 \begin{tabular}{|c| c| c| c|} 
 \hline
SGD & SGDHess & AdamW & AdaHessian \\ 
 \hline\hline
29.75 & 33.73 & 33.95 & 33.62 \\ 
 \hline
 \end{tabular}
\end{table}
We use the IWSLT'14 German to English dataset that contains 153k/7k/7k in the train/validation/test set. Our experiments are run using all the default values specified in the official implementation of Fairseq \citep{ott2019fairseq}. We use BLEU \citep{papineni-etal-2002-bleu} as the evaluation metrics for our experiment. For AdaHessian, we use the parameters specified in \citep{yao2020adaHessian}. For other optimizers, we again run a grid search to find the best learning rate for the task. The best bleu scores are reported in Table \ref{bleu}. It is worth stressing that SGDHess is an algorithm based on SGD, which consistently performs much worse than adaptive algorithms such as AdaHessian and AdamW in this type of task. Still, SGDHess is able to produce comparable results to those of AdaHessian and AdamW, thus significant bridging the gap between SGD and other adaptive algorithms (an almost 4 points increase compared to SGD's in BLEU score, which is significant for the task).
\subsection{Discussion on run time}
Since SGDHess requires the computation of Hessian-vector product in every iteration to "correct" the momentum, it is inevitable that its run time is slower than that of first-order algorithms such as Adam or SGD. Fortunately, the penalty is small even in our unoptimized implementation. Specifically, for image classification task, SGDHess is roughly 1.7/1.6 times slower than SGD and Adam respectively (AdaHessian is 1.9/1.7 times slower than SGD/Adam). For NLP task, SGDHess is 1.3 times slower than SGD/Adam (AdaHessian is 1.7 times slower). Furthermore, the tuning of SGDHess is relatively straighforward. In a lot of cases, the optimal tuning of SGDHess is the same as that of SGD. Thus, to reduce the computation overhead, one could try tuning SGD first then using the optimal parameters of SGD for SGDHess.
\section{Conclusion and Future Work}\label{sec:conclusion}
In this paper, we have presented SGDHess, a novel SGD-based algorithm using Hessian-corrected momentum. We show that when the objective is second-order smooth, our algorithm can achieve the optimal $O(\epsilon^{-3})$ bound. Further, we provide a variation of our algorithm with normalized updates where the analysis is significantly simplified and we do not require a Lipschitz bound on the losses while still maintaining the optimal convergence rate. Finally, we provide experimental results on 3 different tasks in Computer Vision, Language Modeling, and Neural Machine Translation. In each of these tasks, SGDHess consistently performs better or comparable to other commonly used optimizers such as SGD and Adam. 


It is our hope that this work demonstrates that Hessian-based optimization can combine both theoretical and practical improvements for large-scale machine learning problems. Our modification to the standard momentum formula is extremely simple, and so we suspect that is possible to make similar modifications to other popular optimization algorithms that use momentum. For example, one might hope that an appropriate modification to Adam \citep{kingma2014adam} or AMSGrad \citep{reddi1904convergence} might yield an Hessian-based optimized that enjoys \emph{adaptive} convergence rates and improved empirical performance.

{\small
\bibliography{arxiv}
\bibliographystyle{icml2021}
}

\clearpage
\appendix

\section{Appendix}\label{appendix}
\subsection{License}
Image Classification: Imagenet has BSD 3-Clause License, Resnet has Apache License, Cifar10 has MIT License.

Neural Machine Translation: Fairseq has MIT License.

All experiments are implemented on Pytorch which has BSD License. Other assets that we use have no license.
\subsection{Additional details on the Experiments}
\textbf{Image Classification:} Here we provide some extra details of our experiments. From the results in Table \ref{sample-table}, we can see that SGDHess achieves the best accuracy among all optimizers. SGDHess also has the lowest standard deviation, indicating that it consistently performs well in all experiments. For Imagenet task, our code is based on the official implementation of Imagenet on Pytorch. We also keep all the default settings constant. The only thing that we change is the learning rate schedule (from step decay every 30 epochs to plateu decay where we decrease our learning rate by a factor of two if we do not make progress in three consecutive epochs) based on the suggestion from \citep{yao2020adaHessian}.
\begin{figure*}[ht!]
\centering
 \begin{subfigure}
     \centering
     \includegraphics[width=0.48\textwidth]{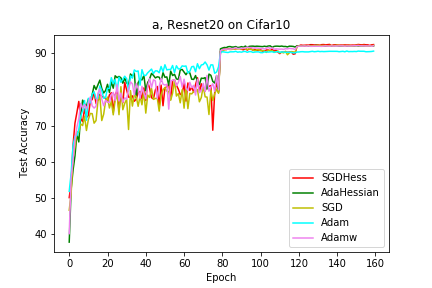}
     \label{fig:cifar}
    \quad
  \end{subfigure}
  \begin{subfigure}
     \centering
     \includegraphics[width=0.48\textwidth]{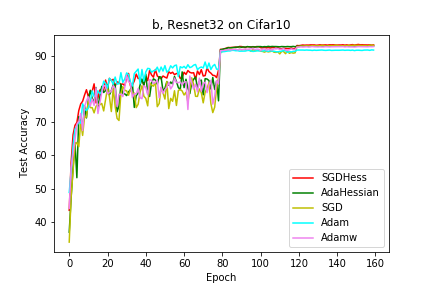}
     \label{fig:cifar}
    \quad
  \end{subfigure}
      \begin{subfigure}
     \centering
     \includegraphics[width=0.48\textwidth]{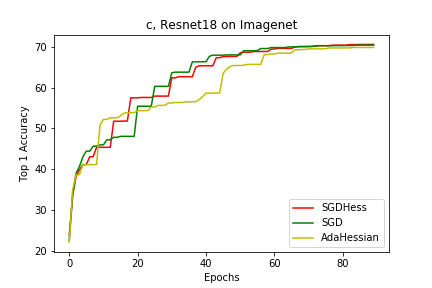}
     \label{fig:imagenet}
  \end{subfigure}
\caption{ (a) Test accuracy on Resnet20 (b) Test accuracy on Resnet32 (c) Test accuracy on Imagenet
    }
  \label{fig:experiments}
\end{figure*}
\begin{table}
  \caption{Test accuracy on Cifar10}
  \label{sample-table}
  \centering
  \begin{tabular}{lll}
    \toprule
    \cmidrule(r){1-2}
        & Resnet20    & Resnet32  \\
    \midrule
    SGD   & $92.14 \pm 0.15$ & $ 93.08 \pm 0.12$      \\
    AdaHessian     & $92.11 \pm 0.07$     & $92.96 \pm 0.09$ \\
     Adam    & $90.58 \pm 0.28$       & $91.62 \pm 0.12$ \\
     AdamW    & $92.05 \pm 0.15$      & $92.45 \pm 0.25$\\
     \midrule
     SGDHess & 92.46 $\pm 0.07$ & 93.19 $\pm$ 0.08     \\
    \bottomrule
  \end{tabular}
\end{table}

\textbf{Neural Machine Translation:} The settings of our experiments follow exactly the settings specified in the translation examples in \citep{ott2019fairseq}. The only things that we tune are learning rate, weight decay, and number of updates if needs be. Other than the main results reported in Table \ref{bleu}, we also run extra experiment with our adaptive algorithm described in Section \ref{adaptive}. The adaptive algorithm achieves the best BLEU score of 35.53, which is slightly worse than the non-adaptive algorithm. The main advantage of the adaptive algorithm is that it is a bit less sensitive to the change in the learning rate. For the non-adaptive algorithm, we need to warm up our optimizer very gradually (we set the number of updates to 8000) else we would run into exploding gradients problem. On the other hand, for the adaptive one, we can just set the default number of updates without any problems. We also suspect that incorporating the per-coordinate or diagonal-style adaptivity of popular optimizers such as Adam may provide an useful future direction for improvement.
\subsection{Supplemental Lemmas}
\begin{Lemma} \label{etabound}
Let $X = \min \{ \frac{\sqrt{A}}{\sqrt{B}}, \frac{A^{1/3}}{C^{1/3}} \}$. Then:
\begin{align*}
    \frac{A}{X} + BX + CX^2 \le 2\sqrt{AB} + 2A^{2/3}C^{1/3}
\end{align*}
\end{Lemma}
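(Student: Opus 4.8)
The plan is to bound each of the three terms $\frac{A}{X}$, $BX$, and $CX^2$ separately, exploiting the fact that $X$ is defined as a minimum of two quantities. Throughout I would assume (as holds in the intended application, where $A,B,C$ are built from $\Delta$, smoothness constants, and variances) that $A,B,C>0$, so that all the divisions below are legitimate.

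First I would handle the two ``easy'' terms, which only need the upper bound $X \le \min\{\sqrt{A}/\sqrt{B},\, A^{1/3}/C^{1/3}\}$. From $X \le \sqrt{A}/\sqrt{B}$ we get $BX \le B\cdot\sqrt{A}/\sqrt{B} = \sqrt{AB}$. From $X \le A^{1/3}/C^{1/3}$, squaring gives $X^2 \le A^{2/3}/C^{2/3}$, hence $CX^2 \le C\cdot A^{2/3}/C^{2/3} = A^{2/3}C^{1/3}$.

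The only term that requires a \emph{lower} bound on $X$ is $A/X$, and here the key observation is that $\frac{1}{\min\{u,v\}} = \max\{\tfrac1u,\tfrac1v\}$ for positive $u,v$. Applying this with $u = \sqrt{A}/\sqrt{B}$ and $v = A^{1/3}/C^{1/3}$ gives $\frac{A}{X} = \max\{\sqrt{AB},\, A^{2/3}C^{1/3}\}$, and then bounding the maximum of two nonnegative numbers by their sum yields $\frac{A}{X} \le \sqrt{AB} + A^{2/3}C^{1/3}$. Adding the three bounds gives $\frac{A}{X} + BX + CX^2 \le 2\sqrt{AB} + 2A^{2/3}C^{1/3}$, which is the claim. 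I do not expect any genuine obstacle: the proof is a two-line term-by-term estimate, and the only point to state carefully is the positivity of $A,B,C$ so that the $\min$/$\max$ reciprocal manipulation is valid.
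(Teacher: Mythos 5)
Your proof is correct and follows essentially the same term-by-term strategy as the paper: the bound $\frac{A}{X}\le\max\{\sqrt{AB},A^{2/3}C^{1/3}\}\le\sqrt{AB}+A^{2/3}C^{1/3}$ is identical, and your bounds $BX\le\sqrt{AB}$ and $CX^2\le A^{2/3}C^{1/3}$ are the same conclusions the paper reaches, only obtained more directly from $X\le\frac{\sqrt{A}}{\sqrt{B}}$ and $X\le\frac{A^{1/3}}{C^{1/3}}$ rather than via the paper's (redundant) case analysis. This is a minor streamlining, not a different argument.
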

\begin{proof}
Bound the first term:
\begin{align*}
    \frac{A}{X} \le \max\{\sqrt{AB},A^{2/3}C^{1/3} \} \le \sqrt{AB} + A^{2/3}C^{1/3}
\end{align*}
Let us do some case work for the second term. If $X = \frac{\sqrt{A}}{\sqrt{B}}$ then $BX = \sqrt{AB}$. Otherwise, if $X = \frac{A^{1/3}}{C^{1/3}}$, then $\frac{A^{1/3}}{C^{1/3}} \le \frac{\sqrt{A}}{\sqrt{B}} \Rightarrow \frac{\sqrt{B}}{C^{1/3}} \le A^{1/6}$. Therefore,
\begin{align*}
    BX = \frac{BA^{1/3}}{C^{1/3}} &\le \sqrt{B}A^{1/6}A^{1/3}\\
    &\le \sqrt{AB}
\end{align*}
In either case, $BX \le \sqrt{AB}$. Now repeat the same arguments for $CX^2$ term, we would get $CX^2 \le A^{2/3}C^{1/3}$. Then we can combine the bounds to get the desired result.
\end{proof}
\begin{Lemma}\label{alphabound}
Let $X = \max \{ \frac{B^{2/3}}{A^{2/3}}, \frac{C^{6/5}}{A^{6/5}}, \frac{D^{4/3}}{A^{4/3}} \}$. Then:
\begin{align*}
    A\sqrt{X} + \frac{B}{X} + \frac{C}{X^{1/3}} + \frac{D}{X^{1/4}} \le 2(B^{1/3}A^{2/3} + C^{3/5}A^{2/5} + D^{2/3}A^{1/3})
\end{align*}
\end{Lemma}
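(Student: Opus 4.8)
The plan is to mimic exactly the structure of the proof of Lemma~\ref{etabound}: split the left-hand side into the leading term $A\sqrt{X}$, which will account for one copy of each of the three quantities on the right, and then bound each of the three remaining terms $B/X$, $C/X^{1/3}$, $D/X^{1/4}$ separately using the fact that $X$ dominates each of the three candidates in the maximum. No case analysis beyond what is implicit in "$X$ is a max" is needed, so there is essentially no obstacle here — it is pure bookkeeping of exponents.

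Concretely, first I would handle the leading term. Since $X = \max\{B^{2/3}/A^{2/3},\, C^{6/5}/A^{6/5},\, D^{4/3}/A^{4/3}\}$ and $\sqrt{\cdot}$ is monotone, $\sqrt{X} = \max\{B^{1/3}/A^{1/3},\, C^{3/5}/A^{3/5},\, D^{2/3}/A^{2/3}\}$, hence
\begin{align*}
 A\sqrt{X} = \max\{B^{1/3}A^{2/3},\, C^{3/5}A^{2/5},\, D^{2/3}A^{1/3}\} \le B^{1/3}A^{2/3} + C^{3/5}A^{2/5} + D^{2/3}A^{1/3}.
\end{align*}
This already yields one full copy of the right-hand side (up to the factor $2$).

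Next I would bound the three fractional terms one at a time. From $X \ge B^{2/3}/A^{2/3}$ we get $B/X \le B^{1/3}A^{2/3}$. From $X \ge C^{6/5}/A^{6/5}$ we get $X^{1/3} \ge C^{2/5}/A^{2/5}$, hence $C/X^{1/3} \le C^{3/5}A^{2/5}$. From $X \ge D^{4/3}/A^{4/3}$ we get $X^{1/4} \ge D^{1/3}/A^{1/3}$, hence $D/X^{1/4} \le D^{2/3}A^{1/3}$. Summing these three with the bound on $A\sqrt{X}$ gives $A\sqrt{X} + B/X + C/X^{1/3} + D/X^{1/4} \le 2\bigl(B^{1/3}A^{2/3} + C^{3/5}A^{2/5} + D^{2/3}A^{1/3}\bigr)$, which is the claim. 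The only thing to be careful about is matching the exponents correctly in passing from $X$ to $X^{1/3}$ and $X^{1/4}$, which is the reason the three ``$\max$'' candidates are written with exponents $2/3$, $6/5$, $4/3$ rather than the more obvious $1$, $1/3$, $1/4$.
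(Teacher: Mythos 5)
Your proposal is correct and follows essentially the same route as the paper's (sketched) proof: bound $A\sqrt{X}$ by the maximum of the three target terms and hence by their sum, then use $X \ge B^{2/3}/A^{2/3}$, $X \ge C^{6/5}/A^{6/5}$, $X \ge D^{4/3}/A^{4/3}$ to bound $B/X$, $C/X^{1/3}$, $D/X^{1/4}$ respectively, yielding the factor of $2$. Your observation that no case analysis is needed here (unlike in the companion $\min$-based Lemma~\ref{etabound}) is accurate, and your exponent bookkeeping checks out.
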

\begin{proof}
The proof is almost the same as the proof of Lemma \ref{etabound}. First, we bound the first term:
\begin{align*}
    A\sqrt{X} \le \max \{ \frac{B^{1/3}}{A^{2/3}}, \frac{C^{3/5}}{A^{2/5}}, \frac{D^{2/3}}{A^{1/3}} \} \le B^{1/3}A^{2/3} + C^{3/5}A^{2/5} + D^{2/3}A^{1/3}
\end{align*}
Then we can do some case works for the other 3 terms, which would get us $\frac{B}{X} + \frac{C}{X^{1/3}} + \frac{D}{X^{1/4}} \le B^{1/3}A^{2/3} + C^{3/5}A^{2/5} + D^{2/3}A^{1/3}$. Now combining the bounds to get the desired results.
\end{proof}
\begin{Lemma}\label{normalizedbound}
With $\alpha = \min \{ \max \{ \frac{1}{T^{2/3}}, \frac{\Delta^{4/5}\rho^{2/5}}{T^{4/5}\sigma_G^{6/5}}, \frac{(2\Delta\sigma_H)^{2/3}}{T^{2/3}\sigma_G^{4/3}} \}, 1 \}$ and $\eta = \min \{ \frac{\sqrt{2\Delta}\alpha^{1/4}}{\sqrt{T(L\sqrt{\alpha}+4\sigma_H})}, \frac{(\Delta\alpha)^{1/3}}{(\rho T)^{1/3}} \}$, we have:
\begin{align*}
    \frac{3\Delta}{2\eta T} +\frac{3L\eta}{4} + \frac{3\sigma_G}{\alpha T} + \frac{3\eta^2\rho}{2\alpha} + \frac{3\eta \sigma_H}{\sqrt{\alpha}} + 3\sigma_G\sqrt{\alpha} &\le \frac{6\sigma_G+ 54^{1/3}(\Delta \sigma_H)^{1/3}\sigma_G^{1/3}}{T^{1/3}}+ \frac{6\sigma_G^{2/5}\Delta^{2/5}\rho^{1/5}}{T^{2/5}} \\
    &+ \frac{\sqrt{9\Delta L} + \sqrt{72\Delta\sigma_H}}{\sqrt{T}} 
    +\frac{6\Delta^{2/3}\rho^{1/3}}{T^{2/3}}+ \frac{\sqrt{18\Delta\sigma_H}}{T^{3/2}} + \frac{3\Delta^{2/3}\rho^{1/3}}{T^{5/3}}
\end{align*}
\end{Lemma}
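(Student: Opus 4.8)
\textbf{Proof plan for Lemma \ref{normalizedbound}.}

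This is a pure substitute-and-simplify lemma, and the plan is to feed the prescribed $\eta$ and $\alpha$ into the six-term expression and collapse it with the two elementary inequalities Lemma \ref{etabound} and Lemma \ref{alphabound}. First I would separate the terms into the four that involve $\eta$, namely $\frac{3\Delta}{2\eta T}$, $\frac{3L\eta}{4}$, $\frac{3\eta^2\rho}{2\alpha}$, $\frac{3\eta\sigma_H}{\sqrt\alpha}$, and the two that involve only $\alpha$, namely $\frac{3\sigma_G}{\alpha T}$ and $3\sigma_G\sqrt\alpha$. Holding $\alpha$ fixed, the first group is $3\big(\frac{A}{\eta}+B\eta+C\eta^2\big)$ with $A=\frac{\Delta}{2T}$, $B=\frac{L}{4}+\frac{\sigma_H}{\sqrt\alpha}$, $C=\frac{\rho}{2\alpha}$; a direct computation shows $\sqrt{A/B}=\frac{\sqrt{2\Delta}\,\alpha^{1/4}}{\sqrt{T(L\sqrt\alpha+4\sigma_H)}}$ and $(A/C)^{1/3}=\frac{(\Delta\alpha)^{1/3}}{(\rho T)^{1/3}}$, so the prescribed $\eta$ is \emph{exactly} $\min\{\sqrt{A/B},(A/C)^{1/3}\}$ and Lemma \ref{etabound} applies verbatim, bounding the group by $6\sqrt{AB}+6A^{2/3}C^{1/3}$. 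Then I would split $\sqrt{AB}\le\sqrt{A}\big(\frac{\sqrt L}{2}+\frac{\sqrt{\sigma_H}}{\alpha^{1/4}}\big)$ using $\sqrt{u+v}\le\sqrt u+\sqrt v$, which peels off a clean $O(1/\sqrt T)$ piece (this is where the $\frac{\sqrt{9\Delta L}}{\sqrt T}$ term of the bound comes from) and leaves terms of the shape $\frac{D}{\alpha^{1/4}}$ with $D=\frac{\sqrt{18\Delta\sigma_H}}{\sqrt T}$ and $\frac{C'}{\alpha^{1/3}}$ with $C'\propto\frac{\Delta^{2/3}\rho^{1/3}}{T^{2/3}}$.

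At this point the whole right-hand side has the form $\text{(const)}/\sqrt T + A'\sqrt\alpha+\frac{B'}{\alpha}+\frac{C'}{\alpha^{1/3}}+\frac{D}{\alpha^{1/4}}$ with $A'=3\sigma_G$, $B'=\frac{3\sigma_G}{T}$. I would then check the identities $(B'/A')^{2/3}=\frac{1}{T^{2/3}}$, $(C'/A')^{6/5}=\frac{\Delta^{4/5}\rho^{2/5}}{T^{4/5}\sigma_G^{6/5}}$, and $(D/A')^{4/3}=\frac{(2\Delta\sigma_H)^{2/3}}{T^{2/3}\sigma_G^{4/3}}$ --- i.e.\ that the three quantities whose maximum, truncated at $1$, defines $\alpha$ are precisely the three ``balance points'' demanded by Lemma \ref{alphabound} for this $A',B',C',D$. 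Hence whenever the outer $\min\{\cdot,1\}$ is inactive (so $\alpha$ equals that maximum), Lemma \ref{alphabound} (or, to keep the constant in the $(\Delta\sigma_H)^{1/3}$ term sharp, a direct rerun of its proof with a case split on which of the three quantities attains the maximum) bounds the $\alpha$-expression by a sum of terms matching, up to absolute constants, $\frac{\sigma_G}{T^{1/3}}$, $\frac{\sigma_G^{2/5}\Delta^{2/5}\rho^{1/5}}{T^{2/5}}$, and $\frac{(\Delta\sigma_H)^{1/3}\sigma_G^{1/3}}{T^{1/3}}$; in the sub-case where the $\sigma_H$-quantity is the maximum one additionally uses the resulting bound $\sigma_G\le\sqrt{2\Delta\sigma_H}$ to route the leftover into the $\frac{\sqrt{72\Delta\sigma_H}}{\sqrt T}$ term. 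When the truncation is active ($\alpha=1$) I would instead bound each of the four $\alpha$-terms by its value at $\alpha=1$ and exploit that $\alpha=1$ forces $\frac{\Delta^{4/5}\rho^{2/5}}{T^{4/5}\sigma_G^{6/5}}>1$ or $\frac{(2\Delta\sigma_H)^{2/3}}{T^{2/3}\sigma_G^{4/3}}>1$, hence $\sigma_G<\frac{\Delta^{2/3}\rho^{1/3}}{T^{2/3}}$ or $\sigma_G<\sqrt{2\Delta\sigma_H/T}$; together with the always-true $\alpha\ge\frac{1}{T^{2/3}}$ (which controls $\frac{B'}{\alpha}$), these feed into the remaining small RHS terms $\frac{6\Delta^{2/3}\rho^{1/3}}{T^{2/3}}$, $\frac{\sqrt{18\Delta\sigma_H}}{T^{3/2}}$ and $\frac{3\Delta^{2/3}\rho^{1/3}}{T^{5/3}}$.

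The conceptual content is minimal --- everything is an application of the two supplemental lemmas --- so the main obstacle is bookkeeping: the truncation $\min\{\cdot,1\}$ in the definition of $\alpha$ prevents a single-line appeal to Lemma \ref{alphabound}, so the $\alpha=1$ regime needs its own somewhat fiddly case analysis, and reproducing the precise (and fairly opaque) constants in the eight RHS terms --- $54^{1/3}$, $\sqrt{72}$, $\sqrt{18}$, the various factors of $3$ --- requires tracking every $\sqrt{u+v}\le\sqrt u+\sqrt v$ split and every $3$ that was pulled out at the start. It is easy to be off by a multiplicative constant, so the cleanest route is to carry the outputs of the two lemmas symbolically (as $A,B,C,D$) and only substitute numbers at the very end.
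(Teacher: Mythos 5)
Your plan is correct and is essentially the paper's own proof: the paper likewise applies Lemma \ref{etabound} with $A=\tfrac{3\Delta}{2T}$, $B=\tfrac{3L}{4}+\tfrac{3\sigma_H}{\sqrt{\alpha}}$, $C=\tfrac{3\rho}{2\alpha}$, splits $\sqrt{AB}$ via $\sqrt{u+v}\le\sqrt{u}+\sqrt{v}$, then applies Lemma \ref{alphabound} with $A'=3\sigma_G$, $B'=\tfrac{3\sigma_G}{T}$, $C'=\tfrac{3\Delta^{2/3}\rho^{1/3}}{T^{2/3}}$, $D=3\sqrt{2}\tfrac{\sqrt{\Delta\sigma_H}}{\sqrt{T}}$, and finally handles the truncation $\alpha=1$ by the same two-case analysis using $\sigma_G\le\Delta^{2/3}\rho^{1/3}/T^{2/3}$ or $\sigma_G\le\sqrt{2\Delta\sigma_H/T}$. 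The identities you verify for $\eta$ and for the three balance points of $\alpha$ are exactly the ones the paper relies on.
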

\begin{proof}
Let $  F = \frac{3\Delta}{2\eta T} +\frac{3L\eta}{4} + \frac{3\sigma_G}{\alpha T} + \frac{3\eta^2\rho}{2\alpha} + \frac{3\eta \sigma_H}{\sqrt{\alpha}} + 3\sigma_G\sqrt{\alpha}$. Applying Lemma \ref{etabound} with $A = \frac{3\Delta}{2T}$, $B = \frac{3L}{4}+\frac{3\sigma_H}{\sqrt{\alpha}}$, and $C =\frac{3\rho}{2\alpha} $ and $\eta$ set to the value of $X$ specified by the Lemma:
\begin{align*}
     F &\le 3\sqrt{2}\sqrt{\frac{\Delta}{T}(\frac{L}{4}+\frac{\sigma_H}{\sqrt{\alpha}})} +\frac{3\Delta^{2/3}\rho^{1/3}}{T^{2/3}\alpha^{1/3}} + \frac{3\sigma_G}{\alpha T} + 3\sigma_G\sqrt{\alpha} 
\end{align*}
Use $\sqrt{a+b} \le \sqrt{a} + \sqrt{b}$:
\begin{align}
    F \le \frac{3}{\sqrt{2}}\frac{\sqrt{\Delta L}}{\sqrt{T}} + 3\sqrt{2} \frac{\sqrt{\Delta\sigma_H}}{\sqrt{T}\alpha^{1/4}} + \frac{3\Delta^{2/3}\rho^{1/3}}{T^{2/3}\alpha^{1/3}} + \frac{3\sigma_G}{\alpha T} + 3\sigma_G\sqrt{\alpha} \label{maineqn}
\end{align}
Now applying Lemma \ref{alphabound} with $X = \alpha$, $A = 3\sigma_G$, $B = \frac{3\sigma_G}{T}$, $C= \frac{3\Delta^{2/3}\rho^{1/3}}{T^{2/3}}$, and $D =3\sqrt{2} \frac{\sqrt{\Delta\sigma_H}}{\sqrt{T}} $:
\begin{align*}
    F \le \frac{3\sqrt{\Delta L}}{\sqrt{2}\sqrt{T}} + \frac{6\sigma_G}{T^{1/3}} + \frac{6\sigma_G^{2/5}\Delta^{2/5}\rho^{1/5}}{T^{2/5}}+\frac{54^{1/3}(\Delta \sigma_H)^{1/3}\sigma_G^{1/3}}{T^{1/3}}
\end{align*}
Let $\alpha = \min \{ \max \{ \frac{1}{T^{2/3}}, \frac{\Delta^{4/5}\rho^{2/5}}{T^{4/5}\sigma_G^{6/5}}, \frac{(2\Delta\sigma_H)^{2/3}}{T^{2/3}\sigma_G^{4/3}} \}, 1 \}$. Since $\frac{1}{T^{2/3}} \le 1$, let us examine the other two cases when $\alpha \ge 1$.

\emph{Case 1: $\frac{\Delta^{4/5}\rho^{2/5}}{T^{4/5}G^{6/5}} \ge 1$}. Then we have $\sigma_G\le \frac{\Delta^{2/3}\rho^{1/3}}{T^{2/3}}$. Substitute to (\ref{maineqn}) with $\alpha = 1$:
\begin{align*}
    F &\le \frac{3\sqrt{\Delta L}}{\sqrt{2}\sqrt{T}} +\frac{3\sqrt{2}\sqrt{\Delta\sigma_H}}{\sqrt{T}} + \frac{3\Delta^{2/3}\rho^{1/3}}{T^{2/3}} + \frac{3\sigma_G}{T} + 3\sigma_G \\
    &\le \frac{3\sqrt{\Delta L}}{\sqrt{2}\sqrt{T}} +\frac{3\sqrt{2}\sqrt{\Delta\sigma_H}}{\sqrt{T}} + \frac{3\Delta^{2/3}\rho^{1/3}}{T^{2/3}} + \frac{3\Delta^{2/3}\rho^{1/3}}{T^{5/3}} + \frac{3\Delta^{2/3}\rho^{1/3}}{T^{2/3}}
\end{align*}
\emph{Case 2: $\frac{(2\Delta\sigma_H)^{2/3}}{T^{2/3}\sigma_G^{4/3}} \ge 1$}. Then we have $\sigma_G \le \frac{\sqrt{2\Delta \sigma_H}}{\sqrt{T}}$. Substitute to (\ref{maineqn}) with $\alpha = 1$:
\begin{align*}
    F &\le \frac{3\sqrt{\Delta L}}{\sqrt{2}\sqrt{T}} +\frac{3\sqrt{2}\sqrt{\Delta\sigma_H}}{\sqrt{T}} + \frac{3\Delta^{2/3}\rho^{1/3}}{T^{2/3}} + \frac{3\sqrt{2\Delta\sigma_H}}{T^{3/2}} + \frac{3\sqrt{2\Delta \sigma_H}}{\sqrt{T}} 
\end{align*}
Now combine the bounds to get the desired result.
\end{proof}
\begin{Lemma} \label{errorLemma}
Define:
\begin{align*}
    \hat \epsilon_t = \hat g_t^{clip} - \nabla F(\bx_t)
\end{align*}
Now, for some constant $\sigma_H$, set $K = \frac{2G^2\rho^2}{-2\sigma_H^2+\sqrt{4\sigma_H^4+\frac{\rho^2G^2}{2}}}$, $\eta_t = \frac{c}{(w+ \sum_{i=1}^{t-2} G_i^2)^{1/3}}$ with $c \le \frac{2G^{2/3}}{\sqrt{K}}$ and $w = \max\{(4Lc)^3, 3G^2\}$, $\alpha_t = 2K\eta_t\eta_{t+1}$. Then we have:
\begin{align*}
     \sum_{t=1}^T \E\left[\frac{6}{5K\eta_{t+1}}\|\hat \epsilon_{t+1}\|^2 -\frac{6}{5K\eta_{t}}\|\hat \epsilon_t\|^2 \right] \le \sum_{t=1}^TE\left[-\frac{3\eta_t}{4}\|\hat\epsilon_t\|^2 + \frac{\eta_t}{5} \|\nabla F(\bx_t)\|^2 \right]&+ \frac{24}{5}Kc^2\ln(T+1)\\
     &+  \frac{16G^4}{25K^2c^3} \ln T
\end{align*}
\end{Lemma}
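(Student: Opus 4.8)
The plan is to re-run the proof of Lemma~\ref{thm:sgderror} essentially line for line, the only genuinely new feature being that here $\eta_t$ — and hence $\alpha_t=2K\eta_t\eta_{t+1}$ — is a random variable rather than a fixed schedule. The point that makes this harmless is that the learning rate is \emph{predictable}: $\eta_t=c/(w+\sum_{i=1}^{t-2}G_i^2)^{1/3}$ is a function of $z_1,\dots,z_{t-2}$ only, and $\eta_{t+1}$ of $z_1,\dots,z_{t-1}$ only, so if $\mathcal{F}_t:=\sigma(z_1,\dots,z_t)$ then $\eta_t$, $\eta_{t+1}$, $\alpha_t$, $\bx_{t+1}=\bx_t-\eta_t\hat g^{clip}_t$, $\hat\epsilon_t$, and the second-order Taylor remainder $\delta_t:=\nabla F(\bx_t)+\nabla^2F(\bx_{t+1})(\bx_{t+1}-\bx_t)-\nabla F(\bx_{t+1})$ are all $\mathcal{F}_t$-measurable, whereas $S_{t+1}:=(\nabla^2 f(\bx_{t+1},z_{t+1})-\nabla^2F(\bx_{t+1}))(\bx_{t+1}-\bx_t)$ and $\epsilon^G_{t+1}:=\nabla f(\bx_{t+1},z_{t+1})-\nabla F(\bx_{t+1})$ are mean-zero given $\mathcal{F}_t$. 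So every expectation in the deterministic proof becomes a conditional expectation $\E[\,\cdot\mid\mathcal{F}_t]$, and we finish by taking full expectations and summing.

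For the one-step recursion I would expand $\hat g_{t+1}-\nabla F(\bx_{t+1})=(1-\alpha_t)(\hat\epsilon_t+\delta_t+S_{t+1})+\alpha_t\epsilon^G_{t+1}$, use that clipping is a projection onto the ball of radius $G$, which contains $\nabla F(\bx_{t+1})$ since $\|\nabla F\|\le G$, hence $\|\hat\epsilon_{t+1}\|\le\|\hat g_{t+1}-\nabla F(\bx_{t+1})\|$; then take $\E[\,\cdot\mid\mathcal{F}_t]$ of the square to kill the cross terms with $S_{t+1},\epsilon^G_{t+1}$, apply Young's inequality to $\|\hat\epsilon_t+\delta_t\|^2$, and invoke $\|\delta_t\|\le\tfrac{\rho}{2}\|\bx_{t+1}-\bx_t\|^2=\tfrac{\rho}{2}\eta_t^2\|\hat g^{clip}_t\|^2\le\tfrac{\rho}{2}\eta_t^2G^2$ from (\ref{eqn:rhosmoothness}), $\E[\|S_{t+1}\|^2\mid\mathcal{F}_t]\le\sigma_H^2\eta_t^2G^2$ from (\ref{eqn:hessvectorvariance}), and $\E[\|\epsilon^G_{t+1}\|^2\mid\mathcal{F}_t]\le4G^2$ from (\ref{eqn:gradientbound}) — the last of these is where the clip and the crude bound $G$ stand in for a gradient-variance hypothesis. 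Multiplying the resulting bound on $\E[\|\hat\epsilon_{t+1}\|^2\mid\mathcal{F}_t]$ by the $\mathcal{F}_t$-measurable factor $\tfrac{6}{5K\eta_{t+1}}$ and subtracting $\tfrac{6}{5K\eta_t}\|\hat\epsilon_t\|^2$ yields the advertised $-\tfrac{3\eta_t}{4}\|\hat\epsilon_t\|^2+\tfrac{\eta_t}{5}\|\nabla F(\bx_t)\|^2$ plus error terms and a new nonnegative discretization piece $\tfrac{6}{5K}(\tfrac{1}{\eta_{t+1}}-\tfrac{1}{\eta_t})\|\hat\epsilon_t\|^2$. Each parameter choice is pinned down exactly here: $K$ is the positive root of $K^2-16\sigma_H^2K-8\rho^2G^2=0$ (equivalently $\tfrac{K}{8}=2\sigma_H^2+\tfrac{\rho^2G^2}{K}$), which is what makes the Hessian-noise contribution and the $\delta_t$-bias contribution balance once both are written through $\alpha_t=2K\eta_t\eta_{t+1}$; $c\le2G^{2/3}/\sqrt{K}$ keeps $K\eta_t^2=O(1)$ so that $\alpha_t<1$ and the recursion contracts; and $w=\max\{(4Lc)^3,3G^2\}$ gives simultaneously $\eta_t\le\eta_2=c/w^{1/3}\le\tfrac{1}{4L}$ and $G_{t-1}^2/(w+\sum_{i=1}^{t-2}G_i^2)\le G^2/w\le\tfrac{1}{3}$, so that $\eta_t/\eta_{t+1}=(1+G_{t-1}^2/(w+\sum_{i=1}^{t-2}G_i^2))^{1/3}\le(4/3)^{1/3}$ and $\eta_{t+1}$, $\eta_t$ are interchangeable up to a bounded factor.

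Finally I would take full expectations and sum over $t=1,\dots,T$. Using $\eta_t^3=c^3/(w+\sum_{i=1}^{t-2}G_i^2)$ and the concavity bound $\tfrac{1}{\eta_{t+1}}-\tfrac{1}{\eta_t}\le G_{t-1}^2/(3c(w+\sum_{i=1}^{t-2}G_i^2)^{2/3})$ for the discretization piece, every residual error term can be put in the form $(\mathrm{const})\cdot G_{t-1}^2/(w+\sum_{i=1}^{t-1}G_i^2)$, and these sum via the AdaGrad-style telescoping inequality $\sum_{s}\tfrac{a_s}{b+\sum_{i\le s}a_i}\le\ln(1+\tfrac{1}{b}\sum_i a_i)$ taken with $a_i=G_i^2$, $b=w$; combined with $\sum_i G_i^2\le TG^2$ (from (\ref{eqn:gradientbound})) and $w\ge3G^2$ this gives $\le\ln(1+T/3)\le\ln(T+1)$, and bookkeeping the constants produces $\tfrac{24}{5}Kc^2\ln(T+1)+\tfrac{16G^4}{25K^2c^3}\ln T$. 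The main obstacle, and the reason the parameter constraints take the shape they do, is precisely keeping the discretization contribution from the \emph{random} learning rate under control: a careless bound on $\sum_t(\tfrac{1}{\eta_{t+1}}-\tfrac{1}{\eta_t})\|\hat\epsilon_t\|^2$ is only polynomial in $T$, so one must instead play this term off against the $-\alpha_t$ contraction and the $\eta_t^3$-scale error budget and verify the net is logarithmic — and arranging every constant so that it lines up with the quadratic defining $K$ is the delicate part.
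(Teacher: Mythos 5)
Your plan is essentially the paper's own proof: the paper likewise re-runs the argument of Lemma \ref{thm:sgderror} after observing that $\eta_t/\alpha_t = 1/(2K\eta_{t+1})$ depends only on $z_1,\dots,z_{t-1}$ (so the cross terms with $\epsilon^H_t$ and $\epsilon^G_{t+1}$ still vanish under conditional expectation), bounds the discretization term via $\tfrac{1}{\eta_{t+1}}-\tfrac{1}{\eta_t}\le G_{t-1}^2/(3c(w+\sum_{i=1}^{t-2}G_i^2)^{2/3})$ followed by Young's inequality with $\lambda=\tfrac{5Kc^3}{4G_{t-1}^2}$, and sums the residuals with the same AdaGrad-style logarithm lemma. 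One concrete caution: you write $\E[\|\epsilon^G_{t+1}\|^2\mid\mathcal{F}_t]\le 4G^2$, but replacing the gradient-noise second moment by a \emph{constant} at that point is fatal to the final bound, since $\sum_t \eta_t\alpha_t G^2 \propto \sum_t c^3G^2/(w+\sum_{i\le t-2}G_i^2)$ can be $\Theta(T)$ when the sampled gradients are small; you must keep the data-dependent bound $\E[\|\epsilon^G_{t+1}\|^2]\le\E[G_{t+1}^2]$ (as the paper does, carrying $\alpha_t^2G_t^2$ through) so that the numerator matches the increments of the denominator and the telescoping inequality yields $\ln(T+1)$ — your later sentence putting every residual in the form $G_{t-1}^2/(w+\sum_i G_i^2)$ shows you intend exactly this, so just make the two statements consistent.
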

\begin{proof}
Similar to Lemma \ref{thm:sgderror}, let us define:
\begin{align*}
    \epsilon^G_t &= \nabla f(\bx_t,z_t) - \nabla F(\bx_t)\\
    \epsilon^H_t &=\nabla^2 f(\bx_{t+1}, z_{t+1})(\bx_{t+1} -\bx_t) -  \nabla^2 F(\bx_{t+1})(\bx_{t+1} -\bx_t)
\end{align*}
Note that we have the important properties:
\begin{align*}
    \E[\epsilon^G_{t}|z_1, z_2, ..., z_{t-1}]&=0\\
    \E[\epsilon^H_t|z_1, z_2,..., z_t]&=0
\end{align*}
Further, we have:
\begin{align*}
    \E[\|\epsilon^G_t\|^2]
    &\le \E[G_t^2]
\end{align*}
And:
\begin{align*}
    \E[\|\epsilon^H_t\|^2]&\le \E[\|\nabla^2 f(\bx_{t+1}, z_{t+1})(\bx_{t+1} -\bx_t) -  \nabla^2 F(\bx_{t+1})(\bx_{t+1} -\bx_t)\|^2]\\&\le \sigma_H^2\|\bx_{t+1}-\bx_t\|^2
\end{align*}
Also,
\begin{align*}
    \E[\|\hat\epsilon_t\|^2]&\le \E[\|\hat g^{clip}_{t}-\nabla F(\bx_t)\|^2]\\&\le 4G^2
\end{align*}
Finally, also note that we must have:
\begin{align*}
    \|\hat g^{clip}_{t}\|\le G
\end{align*} 
for all $t$ due to our definition of $\hat g^{clip}_{t}$. \\
Let us define another quantity:
\begin{align*}
    \hat \epsilon_{t+1}^{\text{noclip}}=\hat g_{t+1}-\nabla F(\bx_{t+1})
\end{align*}
Now, we derive a recursive formula for $\hat \epsilon_{t+1}^{\text{noclip}}$ in terms of $\hat \epsilon_t$:
\begin{align*}
    \hat \epsilon_{t+1}^{\text{noclip}} &= \hat g_{t+1} - \nabla F(\bx_{t+1})\nonumber\\
    &=(1-\alpha_t)(\hat g^{clip}_{t}+\nabla^2 f(\bx_{t+1}, z_{t+1})(\bx_{t+1} - \bx_t)) +\alpha_t \nabla f(\bx_{t+1}, z_{t+1}) - \nabla F(\bx_{t+1})\nonumber\\
    &=(1-\alpha_t)(\hat \hat g_{tclip} +\nabla^2 f(\bx_{t+1}, z_{t+1})(\bx_{t+1} - \bx_{t})- \nabla F(\bx_{t+1})) + \alpha_t(\nabla f(\bx_{t+1}, z_{t+1}) - \nabla F(\bx_{t+1}))\nonumber\\
    &=(1-\alpha_t)(\hat g^{clip}_{t} - \nabla F(\bx_t)) + (1-\alpha_t)(\nabla^2 f(\bx_{t+1}, z_{t+1})-\nabla^2 F(\bx_{t+1})(\bx_{t+1}-\bx_t))\nonumber\\
&\quad\quad+
(1-\alpha_t)(\nabla F(\bx_t) +  \nabla^2F(\bx_{t+1})(\bx_{t+1}-\bx_t) -\nabla F(\bx_{t+1})) + \alpha_t \epsilon^G_{t+1}
\end{align*}
From the analysis of Lemma \ref{thm:sgderror}, we have the relation:
\begin{align}
    \|\hat \epsilon_{t+1}\| \le \|\hat \epsilon_{t+1}^{noclip}\| \label{eqn:clipvsnoclip}
\end{align}
And we also have:
\begin{align}
    \hat\epsilon_{t+1}^{\text{noclip}} &= (1-\alpha_t)(\hat g^{clip}_{t} - \nabla F(\bx_t)) + (1-\alpha_t)(\nabla^2 f(\bx_{t+1}, z_{t+1})-\nabla^2 F(\bx_{t+1})(\bx_{t+1}-\bx_t))\nonumber\\
&\quad\quad+
(1-\alpha_t)(\nabla F(\bx_t) +  \nabla^2F(\bx_{t+1})(\bx_{t+1}-\bx_t) -\nabla F(\bx_{t+1})) + \alpha_t \epsilon^G_{t+1}\label{eqn:epsrecursion1}
\end{align}
Let:
\begin{align}
    \delta_t &= \nabla F(\bx_t) +  \nabla^2F(\bx_{t+1})(\bx_{t+1}-\bx_t) -\nabla F(\bx_{t+1})\nonumber\\
    \|\delta_t\|^2 &\le \frac{\rho^2}{4}\|\bx_{t+1}-\bx_t\|^4 \label{eqn:deltat}
\end{align}
Equation (\ref{eqn:epsrecursion1}) becomes:
\begin{align*}
    \hat \epsilon_{t+1}^{\text{noclip}} &= (1-\alpha_t)\hat{\epsilon_t} +(1-\alpha_t)\epsilon^H_t
    +(1-\alpha_t)\delta_t+\alpha_t \epsilon^G_{t+1}
\end{align*}
Now use relation (\ref{eqn:clipvsnoclip}), we have:
\begin{align*}
    \|\hat \epsilon_{t+1}\|^2 \le \|(1-\alpha_t)\hat{\epsilon_t} +(1-\alpha_t)\epsilon^H_t
    +(1-\alpha_t)\delta_t+\alpha_t \epsilon^G_{t+1} \|^2
\end{align*}
Multiply $\frac{12\eta_t}{5\eta_t}$ to both sides and take the expectation of the above equation:
\begin{align*}
 \E[\frac{12\eta_t}{5\alpha_t}\|\hat \epsilon_{t+1}\|^2] &\le \E[\frac{12\eta_t}{5\alpha_t}\|(1-\alpha_t)\hat{\epsilon_t} +(1-\alpha_t)\epsilon^H_t
    +(1-\alpha_t)\delta_t+\alpha_t \epsilon^G_{t+1} \|^2] 
\end{align*}
Notice that by definition $\frac{\eta_t}{\alpha_t} = \frac{\eta_t}{2K\eta_t\eta_{t+1}}= \frac{1}{2K\eta_{t+1}} = \frac{2K(w+\sum_{i=1}^{t-1}G_i^2)^{1/3}}{c}$ which is independent of the current sample $z_t$. Thus when we take expectation with respect to sample $z_t$, we can consider $\frac{\eta_t}{\alpha_t}$ as a constant. For example, let us analyze $\E\left[\frac{\eta_t}{\alpha_t}\langle \hat \epsilon_t, \epsilon_t^H \rangle\right]$:
\begin{align*}
    \E_{z_1,...,z_t}\left[\frac{\eta_t}{\alpha_t}\langle \hat \epsilon_t, \epsilon_t^H \rangle \rangle \right] &= \E_{z_1,...,z_{t-1}}\left[\E_{z_t}\left[\frac{\eta_t}{\alpha_t}\langle \hat \epsilon_t , \epsilon_t^H \rangle | z_1,..., z_{t-1}\right]\right] \\
    &= \E_{z_1,...,z_{t-1}}\left[\frac{\eta_t}{\alpha_t}\E_{ z_t}\left[\langle \hat \epsilon_t , \epsilon_t^H \rangle | z_1,..., z_{t-1}\right]\right] 
\end{align*}
Then the cross-terms $E[\langle \hat \epsilon_t, \epsilon_t^H \rangle ], E[\langle \delta_t, \epsilon_t^H \rangle ], E[\langle \epsilon_{t+1}^G, \epsilon_t^H \rangle ],  E[\langle \epsilon_{t+1}^G, \hat \epsilon_t \rangle ],  E[\langle \epsilon_{t+1}^G, \delta_t \rangle ]$ all become zero in expectation. Then: 
\begin{align*}
    \E[\frac{12\eta_t}{\alpha_t}\|\hat\epsilon_{t+1}\|^2] &\le \E[\frac{12\eta_t}{5\alpha_t}\left[(1-\alpha_t)^2(\|\hat \epsilon_t\|^2 + \|\epsilon^H_t\|^2 + \|\delta_t\|^2) + \alpha_t^2 \|\epsilon^G_{t+1}\|^2 + 2(1-\alpha_t)^2\langle \hat \epsilon_t, \delta_t\rangle\right]]
\end{align*}
Applying Young's inequality, for any $\lambda$ we have:
\begin{align}
    \langle\hat{\epsilon_{t}}, \delta_t\rangle \leq \frac{\lambda\|\hat{\epsilon_t}\|^2}{2} + \frac{\|\delta_t\|^2}{2\lambda} \label{eqn: crosstermerror}
\end{align}
Using (\ref{eqn:deltat}) and (\ref{eqn: crosstermerror}), we have:
\begin{align*}
    \E[\frac{12\eta_t}{\alpha_t}\|\hat\epsilon_{t+1}\|^2]&\le \E[\frac{12\eta_t}{\alpha_t}[(1-\alpha_t)^2\|\hat{\epsilon_{t}}\|^2 + (1-\alpha_t)^2\sigma_H^2\|\bx_{t+1} - \bx_{t}\|^2 + (1-\alpha_t)^2\frac{\rho^2}{4}\|\bx_{t+1}-\bx_t\|^4+ \alpha_t^2G_t^2\\
&\quad\quad+ (1-\alpha_t)^2(\lambda \|\hat{\epsilon_t}\|^2 + \frac{\|\delta_t\|^2}{\lambda} )]]\\
    &\le\E[\frac{12\eta_t}{\alpha_t}[(1-\alpha_t)^2\|\hat{\epsilon_{t}}\|^2 + (1-\alpha_t)^2\sigma_H^2\|\bx_{t+1} - \bx_{t}\|^2 + (1-\alpha_t)^2\frac{\rho^2}{4}\|\bx_{t+1}-\bx_t\|^4 + \alpha_t^2G_t^2 \\
&\quad\quad+ (1-\alpha_t)^2(\lambda \|\hat{\epsilon_t}\|^2) + (1-\alpha_t)^2 \frac{\rho^2}{4\lambda} \|\bx_{t+1}-\bx_t\|^4]] 
\end{align*}
Next, we observe:
\begin{align*}
    \|\bx_t-\bx_{t+1}\|&\le \eta_t \|\hat g^{clip}_{t}\|\\
    &\le \eta_t(\|\nabla F(\bx_t)\| + \|\hat\epsilon_t\|)
\end{align*}
and:
\begin{align*}
    \|\hat g^{clip}_{t}\|^4 \leq G^2\|\hat g^{clip}_{t}\|^2
\end{align*}
So plugging this back in yields:
\begin{align*}
    \E[\frac{12\eta_t}{\alpha_t}\|\hat\epsilon_{t+1}\|^2]&\le \E[\frac{12\eta_t}{\alpha_t}[(1-\alpha_t)^2[\|\hat{\epsilon_{t}}\|^2 + (1-\alpha_t)^2\sigma_H^2
\eta_{t}^2(\|\nabla F(\bx_t)\|^2 + 2\langle\|\nabla F(\bx_t)\|, \|\hat\epsilon_t\|\rangle+  \|\hat{\epsilon_t}\|^2)\\
&\quad\quad+ \alpha_t^2G_t^2 +(1-\alpha_t)^2\frac{\rho^2}{4}\eta_{t}^4G^2(\|\nabla F(\bx_t)\|^2+ 2\langle\|\nabla F(\bx_t)\|, \|\hat\epsilon_t\|\rangle+  \|\hat{\epsilon_t}\|^2) + (1-\alpha_t)^2(\lambda \|\hat{\epsilon_t}\|^2)\\
&\quad\quad+ (1-\alpha_t)^2 \frac{\rho^2}{4\lambda} \eta_{t}^4G^2(||\nabla F(\bx_t)||^2 + 2\langle\|\nabla F(\bx_t)\|, \|\hat\epsilon_t\|\rangle+  \|\hat{\epsilon_t}||^2)]]
\end{align*}
Again applying Young's Inequality with $\lambda = 1$:
\begin{align*}
    \E[\frac{12\eta_t}{\alpha_t}\|\hat\epsilon_{t+1}\|^2]&\le \E[\frac{12\eta_t}{\alpha_t}[(1-\alpha_t)^2\|\hat{\epsilon_{t}}\|^2] + (1-\alpha_t)^2H^2
\eta_{t}^2(2\|\nabla F(\bx_t)\|^2 + 2\|\hat{\epsilon_t}\|^2) \\
&\quad\quad+(1-\alpha_t)^2\frac{\rho^2}{4}\eta_{t}^4G^2(2\|\nabla F(\bx_t)\|^2+ 2 \|\hat{\epsilon_t}\|^2)+ \alpha_t^2G_t^2 \\
&\quad\quad + (1-\alpha_t)^2(\lambda \|\hat{\epsilon_t}\|^2)+ (1-\alpha_t)^2 \frac{\rho^2}{4\lambda} \eta_{t}^4G^2(2\|\nabla F(\bx_t)\|^2+  2\|\hat{\epsilon_t}\|^2)]]
\end{align*}
Since $ (1-\alpha_t)^2 \le 1 $:
\begin{align*}
     \E[\frac{12\eta_t}{\alpha_t}\|\hat\epsilon_{t+1}\|^2]&\le \E[\frac{12\eta_t}{\alpha_t}[(1-\alpha_t)^2\|\hat{\epsilon_{t}}\|^2] + 2\sigma_H^2
\eta_{t}^2(||\nabla F(\bx_t)||^2 + \|\hat{\epsilon_t}\|^2) + \alpha_t^2G_t^2 +\frac{\rho^2}{2}\eta_{t}^4G^2(\|\nabla F(x_t)\|^2 \\
&\quad\quad+
\|\hat{\epsilon_t}\|^2) + (\lambda \|\hat{\epsilon_t}\|^2) +  \frac{\rho^2}{2\lambda} \eta_{t}^4G^2(\|\nabla F(\bx_t)\|^2 + \|\hat{\epsilon_t}\|^2)]]\\
&= E[\frac{12\eta_t}{\alpha_t}[\|\hat \epsilon_t\|^2((1-\alpha_t)^2 +2\sigma_H^2\eta_{t}^2 + \frac{\rho^2}{2}\eta_{t}^4G^2 + \lambda + \frac{\rho^2}{2\lambda} \eta_{t}^4G^2 ) + \|\nabla F(\bx_t)\|^2 (2\sigma_H^2\eta_{t}^2 + \frac{\rho^2}{2}\eta_{t}^4G^2 \\
&\quad\quad+  \lambda + \frac{\rho^2}{2\lambda} \eta_{t}^4G^2) + \alpha_t^2G_t^2]]
\end{align*}
Now we want the coefficient of the error to be something like 1-$O(\alpha_t)$:
\begin{align*}
    (1-\alpha_t)^2 +2\sigma_H^2\eta_{t}^2 + \frac{\rho^2}{2}\eta_{t}^4G^2 + \lambda + \frac{\rho^2}{2\lambda} \eta_{t}^4G^2 \le 1 - \frac{5}{12}\alpha_t \ (*) 
\end{align*}
Let $$ \alpha_t \leq 1$$ and $$ \lambda = \frac{\alpha_t}{2}$$
For (*) to be satisfied: 
\begin{align*}
    \eta_{t}^4 (\frac{\rho^2}{2}G^2 + \frac{\rho^2}{2\lambda}G^2) + 2\sigma_H^2\eta_{t}^2 - \frac{\alpha_t}{12} \le 0 
\end{align*}
Solving the quadratic equation, we get:
\begin{align*}
    \eta_{t}^2 &\le \frac{-2\sigma_H^2 +  \sqrt{4\sigma_H^4+\frac{\rho^2 G^2\alpha_t}{6}+\frac{\rho^2G^2}{3}}}{G^2\rho^2+\frac{2G^2\rho^2}{\alpha_t}}\\
    &\le \frac{-2\sigma_H^2 +  \sqrt{4\sigma_H^4+\frac{\rho^2G^2}{6}+\frac{\rho^2G^2}{3}}}{G^2\rho^2+\frac{2G^2\rho^2}{\alpha_t}}  \\
    &\le \frac{-2\sigma_H^2+\sqrt{4\sigma_H^4+\frac{\rho^2G^2}{2}}}{G^2\rho^2}\frac{\alpha_t}{2}
\end{align*}
Let  $K = \frac{2G^2\rho^2}{-2\sigma_H^2+\sqrt{4\sigma_H^4+\frac{\rho^2G^2}{2}}}$:
\begin{align}
    \eta_{t}^2K \le \alpha_t \label{eqn: momentine}
\end{align}
So overall we get:
\begin{align*}
    \E[\frac{12\eta_t}{5\alpha_t}\|\hat \epsilon_{t+1}\|^2] &\le \E\left[\frac{12\eta_t}{5\alpha_t} [(1-\frac{5}{12}\alpha_t)\|\hat{\epsilon_{t}}\|^2 + \frac{1}{12}\alpha_{t} \|\nabla F(\bx_t)\|^2 + \alpha_t^2G_t^2]\right]
\end{align*}
Let $\eta_t = \frac{c}{(w+\sum_{i=1}^{t-2} G_i^2)^{1/3}}$ and $\alpha_t = 2K\eta_t\eta_{t+1}$. Then:
\begin{align*}
    \E[\frac{12\eta_t}{5\alpha_t}\|\hat \epsilon_{t+1}\|^2] &\le \E\left[\frac{12\eta_t}{5\alpha_t} [(1-\frac{5}{12}\alpha_t)\|\hat{\epsilon_{t}}\|^2 + \frac{1}{12}\alpha_{t} \|\nabla F(\bx_t)\|^2 + \alpha_t^2G_t^2]\right]\\
    &= \E\left[\frac{12\eta_t}{5\alpha_t}\|\hat \epsilon_t\|^2 -\eta_t\|\hat\epsilon_t\|^2 + \frac{\eta_t}{5}\|\nabla F(\bx_t)\|^2 + \frac{12}{5}\eta_t\alpha_tG_t^2\right] \\
    \Rightarrow  \E[\frac{12\eta_t}{5\alpha_t}\|\hat \epsilon_{t+1}\|^2 -\frac{12\eta_t}{5\alpha_t}\|\hat \epsilon_t\|^2 ] &\le \E\left[ -\eta_t\|\hat\epsilon_t\|^2 + \frac{\eta_t}{5}\|\nabla F(\bx_t)\|^2 + \frac{12}{5}\eta_t\alpha_tG_t^2\right] \\
    \Leftrightarrow \E[\frac{6}{5K\eta_{t+1}}\|\hat \epsilon_{t+1}\|^2 -\frac{6}{5K\eta_{t+1}}\|\hat \epsilon_t\|^2 ] &\le \E\left[ -\eta_t \|\hat{\epsilon_{t}}\|^2 + \frac{\eta_t}{5} \|\nabla F(\bx_t)\|^2 + \frac{24}{5}\eta_t^3KG_t^2\right]
\end{align*}
Subtract $E[\frac{6}{5K\eta_t}\|\hat \epsilon_t\|^2]$ from both sides:
\begin{align*}
    \E[\frac{6}{5K\eta_{t+1}}\|\hat \epsilon_{t+1}\|^2 -\frac{6}{5K\eta_{t}}\|\hat \epsilon_t\|^2 ] &\le \E\left[\frac{6}{5K}(\frac{1}{\eta_{t+1}} - \frac{1}{\eta_t})\|\hat{\epsilon_{t}}\|^2 -\eta_t \|\hat{\epsilon_{t}}\|^2 + \frac{\eta_t}{5} \|\nabla F(\bx_t)\|^2 + \frac{24}{5}\eta_t^3KG_t^2\right]
\end{align*}
Now, let us analyze $\frac{1}{\eta_{t+1}} - \frac{1}{\eta_t}$:
\begin{align*}
    \frac{1}{\eta_{t+1}} - \frac{1}{\eta_t} &= \frac{1}{c}\left[(w+\sum_{i=1}^{t-1}G_i^2)^{1/3} - (w+\sum_{i=1}^{t-2}G_i^2)^{1/3}\right] \\
    &\le \frac{G_{t-1}^2}{3c(w+\sum_{i=1}^{t-2}G_i^2)^{2/3}} \\
    &= \frac{k^2G_{t-1}^2}{3c^3(w+\sum_{i=1}^{t-2}G_i^2)^{2/3}} \\ 
    &= \frac{\eta_t^2G_{t-1}^2}{3c^3} \\
    &\le \frac{G_{t-1}^2\lambda\eta_t}{6c^3} + \frac{G_{t-1}^2\eta_t^3}{6c^3\lambda}
\end{align*}
Plug in:
\begin{align*}
     \E[\frac{6}{5K\eta_{t+1}}\|\hat \epsilon_{t+1}\|^2 -\frac{6}{5K\eta_{t}}\|\hat \epsilon_t\|^2 ] &\le \E\left[\frac{6}{5K}(\frac{G_{t-1}^2\lambda\eta_t}{6c^3} + \frac{G_{t-1}^2\eta_t^3}{6c^3\lambda})\|\hat{\epsilon_{t}}\|^2 -\eta_t \|\hat{\epsilon_{t}}\|^2 + \frac{\eta_t}{5} \|\nabla F(\bx_t)\|^2 + \frac{24}{5}\eta_t^3KG_t^2\right]\\
     &= \E\left[-\eta_t\|\hat{\epsilon_{t}}\|^2 \left(1- \frac{G_{t-1}^2\lambda}{5Kc^3}\right)+ \frac{\eta_t}{5} \|\nabla F(\bx_t)\|^2 + \eta_t^3 \left(\frac{24}{5}KG_t^2 + \frac{G_{t-1}^2\|\hat \epsilon_t\|^2}{5Kc^3\lambda}\right)\right]
\intertext{Let $\lambda = \frac{5Kc^3}{4G_{t-1}^2}$ and use the fact that $\E[\|\hat \epsilon\|^2] \le 4G^2$:}
&\le \E\left[-\frac{3\eta_t}{4}\|\hat\epsilon_t\|^2 + \frac{\eta_t}{5} \|\nabla F(\bx_t)\|^2 +\eta_t^3 \left(\frac{24}{5}KG_t^2 + \frac{16G^2G_{t-1}^4}{25K^2c^6}\right) \right] \\
&\le \E\left[-\frac{3\eta_t}{4}\|\hat\epsilon_t\|^2 + \frac{\eta_t}{5} \|\nabla F(\bx_t)\|^2 +\eta_t^3 \left(\frac{24}{5}KG_t^2 + \frac{16G^4G_{t-1}^2}{25K^2c^6}\right) \right]
\end{align*}
Now sum over t:
\begin{align*}
    \sum_{t=1}^T  \E[\frac{6}{5K\eta_{t+1}}\|\hat \epsilon_{t+1}\|^2 -\frac{6}{5K\eta_{t}}\|\hat \epsilon_t\|^2 ] &\le \sum_{t=1}^T \E\left[-\frac{3\eta_t}{4}\|\hat\epsilon_t\|^2 + \frac{\eta_t}{5} \|\nabla F(\bx_t)\|^2 +\eta_t^3 \left(\frac{24}{5}KG_t^2 + \frac{16G^4G_{t-1}^2}{25K^2c^6}\right) \right]
\end{align*}
From Lemma 4 of \citep{cutkosky2019momentum}, we have the following:
\begin{align*}
    \sum_{t=1}^T \frac{a_t}{a_0 + \sum_{i=1}^t a_i} \le \ln\left(1+\frac{\sum_{i=1}^t a_i}{a_0}\right)
\end{align*}
Analyze third term:
\begin{align*}
    \sum_{t=1}^T \frac{24}{5}K\eta_t^3G_t^2 &= \sum_{t=1}^T \frac{24}{5}Kc^3 \frac{G_t^2}{w+\sum_{i=1}^{t-2}G_i^2}
\end{align*}
Now with $w \ge 3G^2 \ge G^2 + G_{t-1}^2 + G_t^2$, we would get:
\begin{align*}
    \sum_{t=1}^T \frac{24}{5}K\eta_t^3G_t^2 &\le \sum_{t=1}^T \frac{24}{5}Kc^3 \frac{G_t^2}{G^2+\sum_{i=1}^{t}G_i^2}\\
    &\le \frac{24}{5}Kc^2 \ln\left(1+\frac{\sum_{i=1}^TG_i^2}{G^2}\right) \\
    &\le \frac{24}{5}Kc^2\ln(T+1)
\end{align*}
Analyze the fourth term:
\begin{align*}
    \sum_{t=1}^T \frac{16G^4G_{t-1}^2\eta_t^3}{25K^2c^6} &= \sum_{t=1}^T\frac{16G^4}{25K^2c^3}\frac{G_{t-1}^2}{w+\sum_{i=1}^{t-2}G_i^2} \\
    &\le\sum_{t=1}^T\frac{16G^4}{25K^2c^3}\frac{G_{t-1}^2}{2G^2+\sum_{i=1}^{t-1}G_i^2}\\
    &\le \frac{16G^4}{25K^2c^3} \ln\left(1+\sum_{i=1}^{T-1}\frac{G_i^2}{G^2}\right)
    \\
    &\le \frac{16G^4}{25K^2c^3} \ln T
\end{align*}
Then:
\begin{align*}
    \sum_{t=1}^T \E[\frac{6}{5K\eta_{t+1}}\|\hat \epsilon_{t+1}\|^2 -\frac{6}{5K\eta_{t}}\|\hat \epsilon_t\|^2 ] &\le \sum_{t=1}^TE\left[-\frac{3\eta_t}{4}\|\hat\epsilon_t\|^2 + \frac{\eta_t}{5} \|\nabla F(\bx_t)\|^2 \right] + \frac{24}{5}Kc^2\ln(T+1)+  \frac{16G^4}{25K^2c^3} \ln T
\end{align*}
\end{proof}
\subsection{Proof of Lemma \ref{thm:onestep}}
\label{sec:proof2}
\onestep*
\begin{proof}
\begin{align*}
    F(\bx_{t+1})&\le F(\bx_t) +\langle \nabla F(\bx_t), \bx_{t+1}- \bx_t\rangle + \frac{L}{2}\|\bx_{t+1}-\bx_t\|^2\\
    &= F(\bx_t) -\eta_t \langle \nabla F(\bx_t), \hat g^{clip}_{t}\rangle  + \frac{\eta_t^2 L\|\hat g^{clip}_{t}\|^2}{2}
    \intertext{Taking expectation of both sides:}
    \E[F(\bx_{t+1})]&\le \E[F(\bx_t)] - \eta_t \E[\langle \nabla F(\bx_t), \hat g^{clip}_{t}\rangle] + \frac{\eta_t^2 L\E[\|\hat g^{clip}_{t}\|^2]}{2}\\
    &\le \E[F(\bx_t)] - \eta_t\E[\|\nabla F(\bx_t)\|^2]-\eta \E[\langle \nabla F(\bx_t),\hat\epsilon_t\rangle] + \frac{\eta_t^2 L\E[\|\hat g^{clip}_{t}\|^2]}{2}
    \intertext{Using Young's inequality:}
    &\le \E[F(\bx_t)] - \eta_t\E[\|\nabla F(\bx_t)\|^2]+\frac{\eta_t}{2} \E[\|\nabla F(\bx_t)\|^2] + \frac{\eta_t}{2}\E[\|\hat\epsilon_t\|^2] + \frac{\eta_t^2 L\E[\|\hat g^{clip}_t\|^2]}{2}\\
    &\le \E[F(\bx_t)] - \frac{\eta_t}{2}\E[\|\nabla F(\bx_t)\|^2] + \frac{\eta_t}{2}\E[\|\hat\epsilon_t\|^2] + \frac{\eta_t^2 L\E[\|\nabla F(\bx_t) + \hat\epsilon_t\|^2]}{2}
    \intertext{Using $\|a+b\|^2\le 2\|a\|^2+2\|b\|^2$:}
    &\le \E[F(\bx_t)] - \frac{\eta_t}{2}\E[\|\nabla F(\bx_t)\|^2] + \frac{\eta_t}{2}\E[\|\hat\epsilon_t\|^2] + \frac{\eta_t^2 L\E[2\|\nabla F(\bx_t)\| + 2\|\hat\epsilon_t\|^2]}{2}
    \intertext{Using $\eta_t \le \frac{1}{4L}$:}
    &\le \E[F(\bx_t)] - \frac{\eta_t}{2}\E[\|\nabla F(\bx_t)\|^2] + \frac{\eta_t}{2}\E[\|\hat\epsilon_t\|^2] + \frac{\eta_t\E[\|\nabla F(\bx_t)\| + \|\hat\epsilon_t\|^2]}{4}\\
    &= \E[F(\bx_t)]-\frac{\eta_t}{4} \E[\|\nabla F(\bx_t)\|^2]+ \frac{3\eta_t}{4}\E[\|\hat\epsilon_t\|^2]
\end{align*}
\end{proof}

\subsection{Proof of Lemma \ref{thm:sgderror}}
\label{sec:proof3}
\sgderror*
\begin{proof}
Let us additionally define:
\begin{align*}
    \epsilon^G_t &= \nabla f(\bx_t,z_t) - \nabla F(\bx_t)\\
    \epsilon^H_t &=\nabla^2 f(\bx_{t+1}, z_{t+1})(\bx_{t+1} -\bx_t) -  \nabla^2 F(\bx_{t+1})(\bx_{t+1} -\bx_t)
\end{align*}
Note that we have the important properties:
\begin{align*}
    \E[\epsilon^G_{t}]&=0\\
    \E[\epsilon^H_t]&=0
\end{align*}
Note however that $\E[\hat\epsilon_t]\ne 0$. Further, we have:
\begin{align*}
    \E[\|\epsilon^G_t\|^2]&= \E[\|\nabla f(\bx_t,z_t)\|^2 -2\langle\nabla f(\bx_t,z_t),  \nabla F(\bx_t)\rangle +\|\nabla F(\bx_t)\|^2]\\
    &\le G^2 -\E[\|\nabla F(\bx_t)\|^2]\\
    &\le G^2
\end{align*}
From (\ref{eqn:hessvectorvariance}) we have:
\begin{align*}
    \E[\|\epsilon^H_t\|^2]&\le \E[\|\nabla^2 f(\bx_{t+1}, z_{t+1})(\bx_{t+1} -\bx_t) -  \nabla^2 F(\bx_{t+1})(\bx_{t+1} -\bx_t)\|^2]\\&\le \sigma_H^2\|\bx_{t+1}-\bx_t\|^2
\end{align*}
Also,
\begin{align*}
    \E[\|\hat\epsilon_t\|^2]&\le \E[\|\hat g^{clip}_{t}-\nabla F(\bx_t)\|^2]\\&\le 4G^2
\end{align*}
Finally, also note that we must have:
\begin{align*}
    \|\hat g^{clip}_{t}\|\le G
\end{align*} 
for all $t$ due to our definition of $\hat g^{clip}_{t}$. \\
Let us define another quantity:
\begin{align*}
    \hat \epsilon_{t+1}^{\text{noclip}}=\hat g_{t+1}-\nabla F(\bx_{t+1})
\end{align*}
Now, we derive a recursive formula for $\hat \epsilon_{t+1}^{\text{noclip}}$ in terms of $\hat \epsilon_t$:
\begin{align*}
    \hat \epsilon_{t+1}^{\text{noclip}} &= \hat g_{t+1} - \nabla F(\bx_{t+1})\nonumber\\
    &=(1-\alpha_t)(\hat g^{clip}_{t}+\nabla^2 f(\bx_{t+1}, z_{t+1})(\bx_{t+1} - \bx_t)) +\alpha_t \nabla f(\bx_{t+1}, z_{t+1}) - \nabla F(\bx_{t+1})\nonumber\\
    &=(1-\alpha_t)(\hat \hat g_{tclip} +\nabla^2 f(\bx_{t+1}, z_{t+1})(\bx_{t+1} - \bx_{t})- \nabla F(\bx_{t+1})) + \alpha_t(\nabla f(\bx_{t+1}, z_{t+1}) - \nabla F(\bx_{t+1}))\nonumber\\
    &=(1-\alpha_t)(\hat g^{clip}_{t} - \nabla F(\bx_t)) + (1-\alpha_t)(\nabla^2 f(\bx_{t+1}, z_{t+1})-\nabla^2 F(\bx_{t+1})(\bx_{t+1}-\bx_t))\nonumber\\
&\quad\quad+
(1-\alpha_t)(\nabla F(\bx_t) +  \nabla^2F(\bx_{t+1})(\bx_{t+1}-\bx_t) -\nabla F(\bx_{t+1})) + \alpha_t \epsilon^G_{t+1}
\end{align*}
Now, let's compare $\|\hat\epsilon_{t+1}\|$ and $\|\hat\epsilon_{t+1}^{\text{noclip}}\|$. If $\|\hat g_{t+1}\| \le G$ (no clipping),
\begin{align}
    \hat g_{t+1} &= \hat g_{t+1clip}\nonumber\\
    \Rightarrow\|\hat\epsilon_{t+1}\| &= \|\hat\epsilon_{t+1}^{\text{noclip}}\|\label{eqn:errorrelation}
\end{align}
If $\|\hat g_{t+1}\| > G$, $\|\hat g^{clip}_{t+1}\| = G$. Since $\|\hat g^{clip}_{t+1}\|$ and $\|\hat g_{t+1}\|$ are co-linear, $\|\hat g_{t+1}\| -\|\hat g^{clip}_{t+1}\| = \|\hat g_{t+1} - \hat g^{clip}_{t+1}\|$. Therefore:
\begin{align}
    (\|\hat g_{t+1}\| + \|\hat g^{clip}_{t+1}\|)(\|\hat g_{t+1}\| - \|\hat g^{clip}_{t+1}\|) \ge 2G\|\hat g_{t+1} - \hat g^{clip}_{t+1}\| \label{eqn:firstrelation}
\end{align}
Using (\ref{eqn:gradientbound}) and applying Cauchy-Schwarz inequality, we have:
\begin{align}
    2G\|\hat g_{t+1} - \hat g^{clip}_{t+1}\| &\ge 2\|\nabla F(\bx_{t+1})\|\|\hat g_{t+1} - \hat g^{clip}_{t+1}\| \nonumber \\
    &\ge 2\langle\hat g_{t+1} - \hat g_{t+1clip}, \nabla F(\bx_{t+1})\rangle \label{eqn:secondrelation}
\end{align}
Combining (\ref{eqn:firstrelation}) and (\ref{eqn:secondrelation}):
\begin{align}
(\|\hat g_{t+1}\| + \|\hat g^{clip}_{t+1}\|)(\|\hat g_{t+1}\| - \|\hat g^{clip}_{t+1}\|) &\ge 2\langle\hat g_{t+1} - \hat g^{clip}_{t+1}, \nabla F(\bx_{t+1})\rangle \nonumber \\
      \|\hat g_{t+1}\|^2 - \|\hat g^{clip}_{t+1}\|^2 &\ge 2\langle\hat g_{t+1}, \nabla F(\bx_{t+1})\rangle - 2\langle\hat g^{clip}_{t+1}, \nabla F(\bx_{t+1})\rangle \nonumber\\
      \|\hat g_{t+1}\|^2 - 2\langle\hat g_{t+1}, \nabla F(\bx_{t+1})\rangle &\ge \|\hat g^{clip}_{t+1}\|^2 - 2\langle\hat g^{clip}_{t+1}, \nabla F(\bx_{t+1})\rangle \nonumber \\
      \|\hat g_{t+1}\|^2 - 2\langle\hat g_{t+1}, \nabla F(\bx_{t+1})\rangle + \|\nabla F(\bx_{t+1})\|^2 &\ge \|\hat g^{clip}_{t+1}\|^2 - 2\langle\hat g^{clip}_{t+1}, \nabla F(\bx_{t+1})\rangle+ \|\nabla F(\bx_{t+1})\|^2 \nonumber\\
      \|\hat g_{t+1} - \nabla F(\bx_{t+1})\|^2 &\ge \|\hat g^{clip}_{t+1} - \nabla F(\bx_{t+1})\|^2 \nonumber \\
      \|\hat\epsilon_{t+1}^{\text{noclip}}\|^2 &\ge  \|\hat\epsilon_{t+1}\|^2 \nonumber \\
      \|\hat\epsilon_{t+1}^{\text{noclip}}\| &\ge  \|\hat\epsilon_{t+1}\| \label{eqn:lastine}
\end{align}
From relation (\ref{eqn:errorrelation}) and (\ref{eqn:lastine}):
\begin{align}
     \|\hat\epsilon_{t+1}\| \le \|\hat\epsilon_{t+1}^{\text{noclip}}\| \label{eqn:finaline}
\end{align}
We have:
\begin{align}
    \hat\epsilon_{t+1}^{\text{noclip}} &= (1-\alpha_t)(\hat g^{clip}_{t} - \nabla F(\bx_t)) + (1-\alpha_t)(\nabla^2 f(\bx_{t+1}, z_{t+1})-\nabla^2 F(\bx_{t+1})(\bx_{t+1}-\bx_t))\nonumber\\
&\quad\quad+
(1-\alpha_t)(\nabla F(\bx_t) +  \nabla^2F(\bx_{t+1})(\bx_{t+1}-\bx_t) -\nabla F(\bx_{t+1})) + \alpha_t \epsilon^G_{t+1}\label{eqn:epsrecursion}
\end{align}
Let:
\begin{align}
    \delta_t &= \nabla F(\bx_t) +  \nabla^2F(\bx_{t+1})(\bx_{t+1}-\bx_t) -\nabla F(\bx_{t+1})\nonumber\\
    \|\delta_t\|^2 &\le \frac{\rho^2}{4}\|\bx_{t+1}-\bx_t\|^4 \label{eqn:deltat}
\end{align}
     Equation (\ref{eqn:epsrecursion}) becomes:
\begin{align*}
    \hat \epsilon_{t+1}^{\text{noclip}} &= (1-\alpha_t)\hat{\epsilon_t} +(1-\alpha_t)\epsilon^H_t
    +(1-\alpha_t)\delta_t+\alpha_t \epsilon^G_{t+1}
\end{align*}
Now, remember that we are actually interested in $\E[\|\hat \epsilon_t\|^2]$, so let us take the norm-squared of both sides in the above and use relation (\ref{eqn:finaline}):
\begin{align*}
    \E[\|\hat\epsilon_{t+1}\|^2] &\le (1-\alpha_t)^2\E[\|\hat{\epsilon_{t}}\|^2]+ (1-\alpha_t)^2\E[\|\epsilon^H_{t}\|^2] + (1-\alpha_t)^2\E[\|\delta_t\|^2] + \alpha_t^2\E[||\epsilon^G_{t+1}\|^2] +2(1-\alpha_t)^2\E[\langle\hat{\epsilon_{t}}, \delta_t\rangle] 
\end{align*}
Applying Young's inequality, for any $\lambda$ we have:
\begin{align}
    \langle\hat{\epsilon_{t}}, \delta_t\rangle \leq \frac{\lambda\|\hat{\epsilon_t}\|^2}{2} + \frac{\|\delta_t\|^2}{2\lambda} \label{eqn: crosstermerror}
\end{align}
Using (\ref{eqn:deltat}) and (\ref{eqn: crosstermerror}), we have:
\begin{align*}
    \E[\|\hat\epsilon_{t+1}\|^2]&\le (1-\alpha_t)^2\E[\|\hat{\epsilon_{t}}\|^2] + (1-\alpha_t)^2\sigma_H^2\E[\|\bx_{t+1} - \bx_{t}\|^2] + (1-\alpha_t)^2\frac{\rho^2}{4}\E[\|\bx_{t+1}-\bx_t\|^4] + \alpha_t^2G^2\\
&\quad\quad+ (1-\alpha_t)^2(\lambda \E[\|\hat{\epsilon_t}\|^2] + \frac{\E[\|\delta_t\|^2]}{\lambda} )\\
    &\le(1-\alpha_t)^2\E[\|\hat{\epsilon_{t}}\|^2] + (1-\alpha_t)^2\sigma_H^2\E[\|\bx_{t+1} - \bx_{t}\|^2] + (1-\alpha_t)^2\frac{\rho^2}{4}\E[\|\bx_{t+1}-\bx_t\|^4] + \alpha_t^2G^2 \\
&\quad\quad+ (1-\alpha_t)^2(\lambda \E[\|\hat{\epsilon_t}\|^2]) + (1-\alpha_t)^2 \frac{\rho^2}{4\lambda} \E[\|\bx_{t+1}-\bx_t\|^4] 
\end{align*}
Next, we observe:
\begin{align*}
    \|\bx_t-\bx_{t+1}\|&\le \eta_t \|\hat g^{clip}_{t}\|\\
    &\le \eta_t(\|\nabla F(\bx_t)\| + \|\hat\epsilon_t\|)
\end{align*}
and:
\begin{align*}
    \|\hat g^{clip}_{t}\|^4 \leq G^2\|\hat g^{clip}_{t}\|^2
\end{align*}
So plugging this back in yields:
\begin{align*}
    \E[\|\hat\epsilon_{t+1}\|^2]&\le (1-\alpha_t)^2\E[\|\hat{\epsilon_{t}}\|^2] + (1-\alpha_t)^2\sigma_H^2
\eta_{t}^2\E(\|\nabla F(\bx_t)\|^2 + 2\langle\|\nabla F(\bx_t)\|, \|\hat\epsilon_t\|\rangle+  \|\hat{\epsilon_t}\|^2)\\
&\quad\quad+ \alpha_t^2G^2 +(1-\alpha_t)^2\frac{\rho^2}{4}\eta_{t}^4G^2\E[\|\nabla F(\bx_t)\|^2+ 2\langle\|\nabla F(\bx_t)\|, \|\hat\epsilon_t\|\rangle+  \|\hat{\epsilon_t}\|^2] + (1-\alpha_t)^2(\lambda \E[\|\hat{\epsilon_t}\|^2])\\
&\quad\quad+ (1-\alpha_t)^2 \frac{\rho^2}{4\lambda} \eta_{t}^4G^2\E[]||\nabla F(\bx_t)||^2 + 2\langle\|\nabla F(\bx_t)\|, \|\hat\epsilon_t\|\rangle+  \|\hat{\epsilon_t}||^2] 
\end{align*}
Again applying Young's Inequality with $\lambda = 1$:
\begin{align*}
    \E[\|\hat\epsilon_{t+1}\|^2]&\le (1-\alpha_t)^2\E[\|\hat{\epsilon_{t}}\|^2] + (1-\alpha_t)^2\sigma_H^2
\eta_{t}^2\E[2\|\nabla F(\bx_t)\|^2 + 2\|\hat{\epsilon_t}\|^2] \\
&\quad\quad+(1-\alpha_t)^2\frac{\rho^2}{4}\eta_{t}^4G^2\E[2\|\nabla F(\bx_t)\|^2+ 2 \|\hat{\epsilon_t}\|^2] + \alpha_t^2G^2 \\
&\quad\quad+ (1-\alpha_t)^2(\lambda \E[\|\hat{\epsilon_t}\|^2])+ (1-\alpha_t)^2 \frac{\rho^2}{4\lambda} \eta_{t}^4G^2\E(2\|\nabla F(\bx_t)\|^2+  2\|\hat{\epsilon_t}\|^2)
\end{align*}
Since $ (1-\alpha_t)^2 \le 1 $:
\begin{align*}
     \E[\|\hat\epsilon_{t+1}\|^2]&\le (1-\alpha_t)^2\E[\|\hat{\epsilon_{t}}\|^2] + 2\sigma_H^2
\eta_{t}^2\E[\|\nabla F(\bx_t)\|^2 + \|\hat{\epsilon_t}\|^2] + \alpha_t^2G^2 +\frac{\rho^2}{2}\eta_{t}^4G^2\E[\|\nabla F(x_t)\|^2 \\
&\quad\quad+
\|\hat{\epsilon_t}\|^2] + (\lambda \E[\|\hat{\epsilon_t}\|^2]) +  \frac{\rho^2}{2\lambda} \eta_{t}^4G^2\E[\|\nabla F(\bx_t)\|^2 + \|\hat{\epsilon_t}\|^2]\\
&\le \E[\|\hat{\epsilon_{t}}\|^2]\left[(1-\alpha_t)^2 +2\sigma_H^2\eta_{t}^2 + \frac{\rho^2}{2}\eta_{t}^4G^2 + \lambda + \frac{\rho^2}{2\lambda} \eta_{t}^4G^2 \right]+ \alpha_t^2G^2\\
&+\quad\E[\|\nabla F(\bx_t)\|^2]\left[ \eta_{t}^4 \left(\frac{\rho^2}{2}G^2 + \frac{\rho^2}{2\lambda}G^2\right) + 2\sigma_H^2\eta_{t}^2\right]  
\end{align*}
Now, we will choose parameters in such a way as to ensure:
\begin{align*}
    (1-\alpha_t)^2 +2\sigma_H^2\eta_{t}^2 + \frac{\rho^2}{2}\eta_{t}^4G^2 + \lambda + \frac{\rho^2}{2\lambda} \eta_{t}^4G^2 \le 1 - \frac{5}{12}\alpha_t \ (*) 
\end{align*}
To this end, let $$ \alpha_t \leq 1$$ and $$ \lambda = \frac{\alpha_t}{2}$$
For (*) to be satisfied: 
\begin{align*}
    \eta_{t}^4 \left(\frac{\rho^2}{2}G^2 + \frac{\rho^2}{2\lambda}G^2\right) + 2\sigma_H^2\eta_{t}^2 - \frac{\alpha_t}{12} \le 0 
\end{align*}
Solving the quadratic equation, we get:
\begin{align*}
    \eta_{t}^2 &\le \frac{-2\sigma_H^2 +  \sqrt{4\sigma_H^4+\frac{\rho^2 G^2\alpha_t}{6}+\frac{\rho^2G^2}{3}}}{G^2\rho^2+\frac{2G^2\rho^2}{\alpha_t}}\\
    &\le \frac{-2\sigma_H^2 +  \sqrt{4\sigma_H^4+\frac{\rho^2G^2}{6}+\frac{\rho^2G^2}{3}}}{G^2\rho^2+\frac{2G^2\rho^2}{\alpha_t}}  \\
    &\le \frac{-2\sigma_H^2+\sqrt{4\sigma_H^4+\frac{\rho^2G^2}{2}}}{G^2\rho^2}\frac{\alpha_t}{2}
\end{align*}
Let  $K = \frac{2G^2\rho^2}{-2\sigma_H^2+\sqrt{4\sigma_H^4+\frac{\rho^2G^2}{2}}}$, and suppose that:
\begin{align}
    \eta_{t}^2K \le \alpha_t \label{eqn: momentine}
\end{align}
So overall we get:
\begin{align*}
    \E[\|\hat\epsilon_{t+1}\|^2]&\le\left(1-\frac{5}{12}\alpha_t\right)\E[\|\hat{\epsilon_{t}}\|^2] + \frac{1}{12}\alpha_{t} \E[\|\nabla F(\bx_t)\|^2] + \alpha_t^2G^2\\
    \frac{12\eta_t}{5\alpha_t} \E[\|\hat\epsilon_{t+1}\|^2-\|\hat\epsilon_t\|^2]&\le -\eta_t\E[\|\hat\epsilon_t\|^2] + \frac{\eta_t}{5}\E[\|\nabla F(\bx_t)\|^2] + \frac{12}{5}\eta_t \alpha_t G^2
\end{align*}
Pick $\alpha_t = 2K\eta_t\eta_{t+1}$ and $\eta_t = \frac{1}{Ct^{1/3}}$ ($\frac{\eta_{t}}{\eta_{t+1}} < 2 $ so $\alpha_t$ satisfied (\ref{eqn: momentine})):
\begin{align*}
     \frac{6}{5K\eta_{t+1}} \E[\|\hat{\epsilon_{t+1}}\|^2 -\|\hat{\epsilon_{t}}\|^2 ] \le  -\eta_t \E[|\hat{\epsilon_{t}}\|^2] + \frac{\eta_t}{5} \E[\|\nabla F(\bx_t)\|^2] + \frac{24}{5}\eta_t^3KG^2
\end{align*}
Unfortunately, the coefficient on $\E[\|\hat\epsilon_{t}\|^2]$ above is wrong - it has $\eta_{t+1}$ instead of $\eta_{t}$. Let's correct that:
\begin{align*}
   \frac{6}{5K\eta_{t+1}} \E[\|\hat{\epsilon_{t+1}}\|^2] - \frac{6}{5K\eta_{t}} \E[\|\hat{\epsilon_{t}}\|^2] \le \frac{6}{5K} (\frac{1}{\eta_{t+1}} - \frac{1}{\eta_t})\E[\|\hat{\epsilon_{t}}\|^2] - \eta_t \E[\|\hat{\epsilon_{t}}\|^2] + \frac{\eta_t}{5} \E\|\nabla F(\bx_t)\|^2] + \frac{24}{5}\eta_t^3KG^2
\end{align*}
So, we need to understand $\frac{1}{\eta_{t+1}} - \frac{1}{\eta_t}$:
\begin{align*}
    \frac{1}{\eta_{t+1}} - \frac{1}{\eta_t}&=C((t+1)^{1/3}-t^{1/3})\\
    &\le \frac{C}{3t^{2/3}}\\
    &\le \frac{C^3\eta_t^2}{3}
\end{align*}
Now use Young's Inequality ($ab\le \frac{a^2\lambda}{2}+\frac{b^2}{2\lambda}$) with $a=\sqrt{\eta_t}$ and $b=\eta_t^{3/2}$:
\begin{align*}
    &\le \frac{C^3\lambda \eta_t}{6}+\frac{C^3\eta_t^3}{6\lambda}
\end{align*}
Thus for any $\lambda$ we have:
\begin{align*}
\frac{6}{5K\eta_{t+1}} \E[\|\hat\epsilon_{t+1}\|^2]-\frac{6}{5K\eta_t}\E[\|\hat\epsilon_t\|^2]&\le \frac{6}{5K}\left(\frac{\lambda C^3\eta_t}{6}+\frac{C^3\eta_t^3}{6\lambda}\right)\E[\|\hat\epsilon_{t}\|^2]-\eta_t\E[\|\hat\epsilon_t\|^2] \\
&\quad\quad+ \frac{\eta_t}{5}\E[\|\nabla F(\bx_t)\|^2] + \frac{24}{5}KG^2\eta_t^3\\
&=-\eta_t\left(1 - \frac{C^3\lambda}{5K}\right)\E[\|\hat\epsilon_t\|^2] + \frac{\eta_t}{5}\E[\|\nabla F(\bx_t)\|^2]  \\
&\quad\quad+ \eta_t^3\left(\frac{24}{5}KG^2+\frac{C^3E[\|\hat{\epsilon_{t}}\|^2]}{5K\lambda}\right)
\end{align*}
So, let us set $\lambda = \frac{5K}{4C^3}$ and use $\E[\|\hat\epsilon_t\|^2]\le4 G^2$:
\begin{align*}
    \frac{6}{5K\eta_{t+1}} \E[\|\hat\epsilon_{t+1}\|^2]-\frac{6}{5K\eta_t}\E[\|\hat\epsilon_t\|^2]&\le-\frac{3\eta_t}{4}\E[\|\hat\epsilon_t\|^2] + \frac{\eta_t}{5}\E[\|\nabla F(\bx_t)\|^2]  \\
&\quad\quad+ \eta_t^3\left(\frac{24}{5}KG^2+\frac{16C^6G^2}{25K^2}\right)
\end{align*}
\end{proof}
\subsection{Proof of Lemma \ref{normgradient}}
\label{sec:proof}
\normgradient*
\begin{proof}
Assuming (\ref{eqn:Lsmooth}) holds, with $\bx = \bx_t$ and $\delta = \bx_{t+1} - \bx_t = \eta \hat g_t$, we have:
\begin{align}
    F(\bx_{t+1}) &\le F(\bx_t) - \eta\langle\nabla F(\bx_t), \frac{\hat g_t}{\|\hat g_t \|}\rangle + \frac{L\eta^2}{2} \label{eqn:smoothine}
\end{align}
Let us analyze the inner product term via some case-work: Suppose $\|\hat \epsilon_t\| \le \frac{1}{2}\|\nabla F(\bx_t)\|$. Then we have $ \|\nabla F(\bx_t) + \hat \epsilon_t \| \le \frac{3}{2}\|\nabla F(\bx_t)\|$ so that:
\begin{align*}
    -\langle\nabla F(\bx_t), \frac{\hat g_t}{\|\hat g_t \|}\rangle &= -\langle\nabla F(\bx_t), \frac{\nabla F(\bx_t) + \hat \epsilon_t}{\|\nabla F(\bx_t) + \hat \epsilon_t \|}\rangle \\
    &\le \frac{-\|\nabla F(\bx_t)\|^2}{\|\nabla F(\bx_t) + \hat \epsilon_t \|} + \frac{\|\nabla F(\bx_t)\|\|\hat \epsilon_t\|}{\|\nabla F(\bx_t) + \hat \epsilon_t \|} \\
    &\le -\frac{2}{3}\|\nabla F(\bx_t)\| + 2\|\hat \epsilon_t\|
\end{align*}
On the other hand, if $\|\hat \epsilon_t\| > \frac{1}{2}\|\nabla F(\bx_t)\|$, then we have:
\begin{align*}
    -\langle\nabla F(\bx_t), \frac{\hat g_t}{\|\hat g_t \|}\rangle &\le 0 \\
    &\le -\frac{2}{3}\|\nabla F(\bx_t)\|  + \frac{2}{3}\|\nabla F(\bx_t)\|\\
    &\le -\frac{2}{3}\|\nabla F(\bx_t)\|  + \frac{4}{3}\|\hat \epsilon_t\|
\end{align*}
So either way, we have $ -\langle\nabla F(\bx_t), \frac{\hat g_t}{\|\hat g_t \|}\rangle  \le -\frac{2}{3}\|\nabla F(\bx_t)\|  + 2\|\hat \epsilon_t\| $. Now sum (\ref{eqn:smoothine}) over t and rearrange to obtain:
\begin{align*}
    \E[\|\nabla F(\bx_t)\|] &\le \frac{3(F(\bx_1) - F(\bx_{T+1}))}{2\eta} + \frac{3L\eta T}{4} + 3\sum_{t=1}^{T} \|\hat\epsilon_t\| \\
    &\le \frac{3\Delta}{2\eta} + \frac{3L\eta T}{4} + 3\sum_{t=1}^{T} \|\hat\epsilon_t\|
\end{align*}
Finally, observe that since $\bx_t$ is chosen uniformly at random from $\bx_1,...,\bx_T$, we have $\E\|\nabla F(\bx_t)\| = \frac{1}{T}\sum_{t=1}^{T}\|\nabla F(\bx_t)\|$ to conclude the results.
\end{proof}
\subsection{Proof of Theorem \ref{normalizedhess}}
\label{sec:proof5}
\normalizedhess*
\begin{proof}
Let us write a recursive expression for $\hat \epsilon_t$:
\begin{align*}
    \hat \epsilon_t &= \hat g_t - \nabla F(\bx_t) \\
    &= (1-\alpha) (\hat g_{t-1} + \nabla^2 f(\bx_t, z_t)(\bx_t - \bx_{t-1}))+ \alpha\nabla f(\bx_t, z_t) - \nabla F(\bx_t)\\
    &= (1-\alpha) (\hat g_{t-1} + \nabla^2 f(\bx_t, z_t)(\bx_t - \bx_{t-1}) -\nabla F(\bx_t) )+ \alpha(\nabla f(\bx_t, z_t) - \nabla F(\bx_t))
\end{align*}
Let us define $v_t = \nabla f(\bx_t, z_t) - \nabla F(\bx_t)$ and $w_t = \frac{\nabla^2 f(\bx_t, z_t)(\bx_t - \bx_{t-1}) - \nabla^2 F(\bx_t)(\bx_t - \bx_{t-1})}{\|\bx_t - \bx_{t-1}\|}$. Note that:
\begin{align*}
    w_t &= \frac{\nabla^2 f(\bx_t, z_t)(\bx_t - \bx_{t-1}) - \nabla^2 F(\bx_t)(\bx_t - \bx_{t-1})}{\|\bx_t - \bx_{t-1}\|} \\
    &= \frac{\nabla^2 f(\bx_t, z_t)(\bx_t - \bx_{t-1}) - \nabla^2 F(\bx_t)(\bx_t - \bx_{t-1})}{\eta}
\end{align*}
Finally, define $\delta_t =\nabla F(\bx_{t-1}) + \nabla^2 F(\bx_t)(\bx_t - \bx_{t-1}) -\nabla F(\bx_t) $. Since F is $\rho$-second-order smooth, we must have $\|\delta_t\| \le \frac{\rho}{2}\|\bx_t - \bx_{t-1}\|^2 = \frac{\rho}{2}\eta^2$. Now we write:
\begin{align*}
    \hat \epsilon_t&= (1-\alpha) [\hat g_{t-1} + \nabla^2 f(\bx_t, z_t)(\bx_t -\bx_{t-1}) -\nabla F(\bx_t)]+ \alpha(\nabla f(\bx_t, z_t) - \nabla F(\bx_t))\\
    &= (1-\alpha)[\hat g_{t-1} - \nabla F(\bx_{t-1}) + \|\bx_t-\bx_{t-1}\|w_t + \nabla F(\bx_{t-1}) + \nabla^2 F(\bx_t, z_t)(\bx_t - \bx_{t-1})-\nabla F(\bx_t)] + \alpha v_t \\
    &= (1-\alpha)(\hat \epsilon_{t-1} + \|\bx_t-\bx_{t-1}\|w_t + \delta_t)+\alpha v_t
\end{align*}
Now unroll this recursive expression:
\begin{align*}
    \hat \epsilon_t &= (1-\alpha)^{t-1}\hat \epsilon_1 + \sum_{\tau=0}^{t-1}(1-\alpha)^{\tau+1} (\|\bx_t-\bx_{t-1}\|w_{t-\tau} + \delta_{t-\tau}) + \alpha (1-\alpha)^{\tau}y_{t-\tau}
\intertext{Observe that $\hat\epsilon_1 = v_1$ and apply triangle inequality:}
\hat \epsilon_t &\le (1-\alpha)^{t-1}\|v_1\| +  \frac{\eta^2\rho}{2}\sum_{\tau=0}^{t-1}(1-\alpha)^{\tau+1} + \eta\|\sum_{\tau=0}^{t-1}(1-\alpha)^{\tau+1}w_{t-\tau}\| + \alpha\|\sum_{\tau=0}^{t-1}(1-\alpha)^{\tau}v_{t-\tau}\| 
\intertext{Take expectation of the expression:}
    \E[\|\hat\epsilon_t\|] &\le (1-\alpha)^{t-1}\sigma_G + \frac{\eta^2\rho}{2}\sum_{\tau=0}^{t-1}(1-\alpha)^{\tau+1}+ \eta\sigma_H\sqrt{\sum_{\tau=0}^{t-1}(1-\alpha)^{2\tau+2}} +\sigma_G\alpha\sqrt{\sum_{\tau=0}^{t-1}(1-\alpha)^{2\tau}}
\intertext{All the sums can be upper bounded by $\sum_{\tau=0}^{\infty} (1-\alpha)^{\tau} = \frac{1}{\alpha}$:}
    &\le (1-\alpha)^{t-1}\sigma_G + \frac{\eta^2\rho}{2\alpha} + \frac{\eta\sigma_G}{\sqrt{\alpha}} + \sigma_G\sqrt{\alpha}
\end{align*}
Next, sum over t:
\begin{align*}
    \sum_{t=1}^{T} \E[\|\hat\epsilon_t\|] &\le \frac{\sigma_G}{\alpha} +\frac{\eta^2\rho T}{2\alpha} + \frac{\eta\sigma_H T}{\sqrt{\alpha}} + \sigma_G\sqrt{\alpha}T
\intertext{Applying Lemma \ref{normgradient}:}
    \E[\|\nabla F(\bx_t)\|] &\le \frac{3\Delta}{2\eta T} + \frac{3L\eta}{4} +\frac{3}{T}\sum_{t=1}^{T} \|\hat\epsilon_t\| \\
    &\le \frac{3\Delta}{2\eta T} + \frac{3L\eta}{4} + \frac{3\sigma_G}{T\alpha} +\frac{3\eta^2\rho}{2\alpha} + \frac{3\eta\sigma_H}{\sqrt{\alpha}} + 3\sigma_G\sqrt{\alpha}
\end{align*}
Now, with $\alpha = \min \{ \max \{ \frac{1}{T^{2/3}}, \frac{\Delta^{4/5}\rho^{2/5}}{T^{4/5}\sigma_G^{6/5}}, \frac{(2\Delta\sigma_H)^{2/3}}{T^{2/3}\sigma_G^{4/3}} \}, 1 \}$ and $\eta = \min \{ \frac{\sqrt{2\Delta}\alpha^{1/4}}{\sqrt{T(L\sqrt{\alpha}+4\sigma_H})}, \frac{(\Delta\alpha)^{1/3}}{(\rho T)^{1/3}} \}$, use Lemma \ref{normalizedbound} in the appendix to finish the proof.
\end{proof}
\subsection{Proof of Theorem \ref{thm:adaptive}}
\label{sec:proof6}
\adaptive*
\begin{proof}
Define the potential:
\begin{align*}
    \Phi_t = F(\bx_t) +\frac{6}{5K\eta_t }\|\hat\epsilon_t\|^2
\end{align*}
Then:
\begin{align*}
    \E[\Phi_{t+1}-\Phi_t]&= \E\left[F(\bx_{t+1})-F(\bx_t) + \frac{6}{5K\eta_{t+1} }\|\hat\epsilon_{t+1}\|^2-\frac{6}{5K\eta_t }\|\hat\epsilon_t\|^2\right]
    \intertext{Applying Lemma \ref{errorLemma} and Lemma \ref{thm:onestep} then sum over t:}
    \E[\Phi_{T+1}-\Phi_1] &\le \sum_{t=1}^T\E\left[-\frac{\eta_t}{4}\|\nabla F(\bx_t)\|^2 + \frac{3\eta_t}{4}\|\hat\epsilon_t\|^2 + \frac{6}{5K\eta_{t+1} }\|\hat\epsilon_{t+1}\|^2-\frac{6}{5K\eta_t }\|\hat\epsilon_t\|^2 \right]\\
    &\le \E\left[\sum_{t=1}^T-\frac{\eta_t}{20}\|\nabla F(\bx_t)\|^2 +\frac{24}{5}Kc^2\ln(T+1)+  \frac{16G^4}{25K^2c^3} \ln T \right]
\end{align*}
Reordering the term:
\begin{align*}
    \E\left[\sum_{t=1}^T \eta_t\|\nabla F(\bx_t)\|^2 \right] &\le \E\left[20(\Phi_1-\Phi_{T+1}) + 96Kc^2\ln(T+1)+  \frac{64G^4}{5K^2c^3} \ln T \right]
\end{align*}
Also:
\begin{align*}
    \E[\Phi_1-\Phi_{T+1}] &= \E[F(\bx_1)-F(\bx_{T+1}) + \frac{6}{5K\eta_1}\|\hat\epsilon_1\|^2 - \frac{6}{5K\eta_{T+1}}\|\hat\epsilon_{T+1}\|^2] \\
    &\le \Delta + \frac{6\sigma_G^2w^{1/3}}{5Kc}
\end{align*}
Plug in:
\begin{align*}
    \E\left[\sum_{t=1}^T \eta_t\|\nabla F(\bx_t)\|^2 \right] &\le 20\left(\Delta + \frac{6\sigma_G^2w^{1/3}}{5Kc}\right) +96Kc^2\ln(T+1)+  \frac{64G^4}{5K^2c^3} \ln T
\end{align*}
Now, let us relate $\E\left[\sum_{t=1}^T \eta_t\|\nabla F(\bx_t)\|^2 \right]$ to $\E\left[\sum_{t=1}^T\|\nabla F(\bx_t)\|^2 \right]$. Since $\eta_t$ is decreasing:
\begin{align*}
    \E\left[\sum_{t=1}^T \eta_t\|\nabla F(\bx_t)\|^2 \right] \ge \E\left[\eta_T \sum_{t=1}^T \|\nabla F(\bx_t)\|^2 \right]
\end{align*}
From Cauchy-Schwartz:
\begin{align*}
    \E[1/\eta_T]\E\left[\eta_T\sum_{t=1}^T \|\nabla F(\bx_t)\|^2\right] \ge  \E\left[\sqrt{\sum_{t=1}^T \|\nabla F(\bx_t)\|^2}\right]^2
\end{align*}
Let $M = \frac{1}{c}\left(20(\Delta + \frac{6\sigma_G^2w^{1/3}}{5Kc}) + 96Kc^2\ln(T+1)+  \frac{64G^4}{5K^2c^3} \ln T\right)$, then:
\begin{align*}
    \E\left[\sqrt{\sum_{t=1}^T \|\nabla F(\bx_t)\|^2}\right]^2 &\le \E\left[\frac{cM}{\eta_T}\right]
    \\ &\le \E\left[M(w+\sum_{t=1}^TG_t^2)^{1/3}\right]
\end{align*}
Let $\zeta_t = \nabla f(\bx_t, z) - \nabla F(\bx_t)$ so that $\E[\|\zeta_t\|^2] \le \sigma_G^2$. Then we have $G_t^2 = \|\nabla F(\bx_t) + \zeta_t\|^2 \le 2\|\nabla F(\bx_t)\|^2 + 2\|\zeta_t\|^2 $. Thus:
\begin{align*}
    \E\left[\sqrt{\sum_{t=1}^T \|\nabla F(\bx_t)\|^2}\right]^2 &\le \E\left[M\left(w+ 2\sum_{t=1}^T\|\zeta_t\|^2)\right)^{1/3} + 2^{1/3}M\left(\sum_{t=1}^T\|\nabla F(\bx_t)\|^2\right)^{1/3}\right]
    \\ &\le M(w+2T\sigma_G^2)^{1/3} + \E\left[2^{1/3}M\left(\sqrt{\sum_{t=1}^T\|\nabla F(\bx_t)\|^2}\right)^{2/3}\right]  \\
    &\le M(w+2T\sigma_G^2)^{1/3} +2^{1/3}M\left(\E\left[\sqrt{\sum_{t=1}^T \|\nabla F(\bx_t)\|^2}\right]\right)^{2/3}
\end{align*}
Define $X = \sqrt{\sum_{t=1}^T\|\nabla F(\bx_t)\|^2}$. The the above can be rewritten as:
\begin{align*}
    (\E[X])^2 \le M(w+2T\sigma_G^2)^{1/3} + 2^{1/3}M (\E[X])^{2/3}
\end{align*}
This implies that either $(\E[X])^2 \le 2M(w+2T\sigma_G^2)^{1/3}$ or $(\E[X])^2 \le 2\times2^{1/3}M (\E[X])^{2/3}$. Solving for $\E[X]$ in these two cases, we get:
\begin{align*}
    \E[X] \le \sqrt{2M}(w+2T\sigma_G^2)^{1/6} +2M^{3/4}
\end{align*}
Finally, by Cauchy-Schwartz we have $\sum_{t=1}^T\|\nabla F(\bx_t)\|/T \le X/ \sqrt{T}$. Therefore:
\begin{align*}
    \E\left[\sum_{t=1}^T\frac{\|\nabla F(\bx_t)\|}{T}\right] \le \frac{w^{1/6}\sqrt{2M}+2M^{3/4}}{\sqrt{T}} +\frac{2\sigma_G^{1/3}}{T^{1/3}}
\end{align*}
with $M = \frac{1}{c}\left(20(\Delta + \frac{6\sigma_G^2w^{1/3}}{5Kc}) +96Kc^2\ln(T+1)+  \frac{64G^4}{5K^2c^3} \ln T\right)$
\end{proof}
\end{document}